\title{Non-Uniform Stochastic Average Gradient Method\\for Training Conditional Random Fields}
\author{
Mark Schmidt$^1$, Reza Babanezhad$^1$, Mohamed Osama Ahmed$^1$, \\
Aaron Defazio$^2$, Ann Clifton$^3$, 
Anoop Sarkar$^3$\\\\
$^1$University of British Columbia, $^2$Ambiata, $^3$Simon Fraser University}
\date{}
\def\norm#1{\|#1\|}
\newcommand{\fig}[2]{\includegraphics[width=#1\textwidth]{#2}}
\newtheorem{proposition}{Proposition}
\newtheorem{lemma}{Lemma}
\definecolor{red}{rgb}{1,0,0}
\global\long\def\stepsize{\gamma}
\begin{document}
\maketitle

\begin{abstract}
We apply stochastic average gradient (SAG) algorithms for training conditional random fields (CRFs). 
We describe a practical implementation that uses structure in the CRF gradient to reduce the memory requirement of this linearly-convergent stochastic gradient method, propose a non-uniform sampling scheme that substantially improves practical performance, and analyze the rate of convergence of the SAGA variant under non-uniform sampling. Our experimental results reveal that our method often significantly outperforms existing methods in terms of the training objective, and performs as well or better than optimally-tuned stochastic gradient methods in terms of test error.
\end{abstract}

\section{Introduction}

Conditional random fields (CRFs)~\citep{lafferty2001conditional} are a ubiquitous tool in natural language processing. They are used for part-of-speech tagging~\citep{phan2006}, semantic role labeling~\citep{Cohn05semanticrole}, topic modeling~\citep{Zhu:2010}, information extraction~\citep{peng2006information}, shallow parsing~\citep{sha2003shallow}, named-entity recognition~\citep{settles2004biomedical}, as well as a host of other applications in natural language processing and in other fields such as computer vision~\citep{nowozin2011structured}. Similar to  generative Markov random field (MRF) models, CRFs allow us to model probabilistic dependencies between output variables. 
The key advantage of discriminative CRF models is the ability to use a very high-dimensional feature set, without explicitly building a model for these features (as required by MRF models).
Despite the widespread use of CRFs, a major disadvantage of these models is that they can be very slow to train and
the time needed for numerical optimization in CRF models remains a bottleneck in many applications.

Due to the high cost of evaluating the CRF objective function on even a single training example, it is now common to train CRFs using stochastic gradient methods~\citep{vishwanathan2006accelerated}. These methods are advantageous over deterministic methods because on each iteration they only require computing the gradient of a single example (and not \emph{all} example as in deterministic methods). Thus, if we have a data set with $n$ training examples, the iterations of stochastic gradient methods are $n$ times faster than deterministic methods. However, the number of stochastic gradient iterations required might be very high. This has been studied in the optimization community, which considers the problem of finding the minimum number of iterations $t$ so that we can guarantee that we reach an accuracy of $\epsilon$, meaning that
\[
 f(w^t) - f(w^*) \leq \epsilon \textrm{, and } \norm{w^t - w^*}^2 \leq \epsilon,
\]
where $f$ is our training objective function, $w^t$ is our parameter estimate on iteration $t$, and $w^*$ is the parameter vector minimizing the training objective function. For strongly-convex objectives like $\ell_2$-regularized CRFs, stochastic gradient methods require $O(1/\epsilon)$ iterations~\citep{nemirovski2009robust}. 
This is in contrast to traditional deterministic methods which only require $O(\log(1/\epsilon))$ iterations~\citep{nesterov2004introductory}. However, this much lower number of iterations comes at the cost of requiring us to process the entire data set on each iteration.

For problems with a finite number of training examples,~\citet{roux2012stochastic} recently proposed the stochastic average gradient (SAG) algorithm which combines the advantages of deterministic and stochastic methods: it  only requires evaluating a single randomly-chosen training example on each iteration, and only requires $O(\log(1/\epsilon))$ iterations to reach an accuracy of $\epsilon$.
Beyond this faster convergence rate, the SAG method also allows us to address two issues that have traditionally frustrated users of stochastic gradient methods: \emph{setting the step-size} and \emph{deciding when to stop}. Implementations of the SAG method use both an adaptive step-size procedure and a cheaply-computable criterion for deciding when to stop. \citet{roux2012stochastic} show impressive empirical performance of the SAG algorithm for binary classification.

This is the first work to apply a SAG algorithm to train CRFs. We show that tracking marginals in the CRF can drastically reduce the SAG method's huge memory requirement. We also give a non-uniform sampling (NUS) strategy that adaptively estimates how frequently we should sample each data point, and we  show that the SAG-like algorithm of~\citet{defazio2014saga} converges under any NUS strategy while a particular NUS strategy achieves a faster rate. Our experiments compare the SAG algorithm with  a variety of competing deterministic, stochastic, and semi-stochastic methods on benchmark data sets for four common tasks: part-of-speech tagging, named entity recognition, shallow parsing, and optical character recognition.
Our results indicate that the SAG algorithm with NUS  often outperforms previous methods by an order of magnitude in terms of the training objective and, despite not requiring us to tune the step-size, performs as well or better than optimally tuned stochastic gradient methods in terms of the test error.

\section{Conditional Random Fields}

CRFs model the conditional probability of a structured output $y \in \mathcal{Y}$ (such as a sequence of labels) given an input $x \in \mathcal{X}$ (such as a sequence of words) based on features $F(x,y)$ and parameters $w$ using
\begin{equation}
\label{eq:logLinear}
 p(y|x,w) = \frac{\exp(w^TF(x,y))}{\sum_{y'}\exp(w^TF(x,y'))}.
\end{equation}
Given $n$ pairs $\{x_i,y_i\}$ comprising our training set, the standard approach to training the CRF is to minimize the $\ell_2$-regularized negative log-likelihood,
\begin{equation}
\label{eq:CRF}
 \min_w f(w) = \frac{1}{n}\sum_{i=1}^n -\log p(y_i|x_i,w) + \frac{\lambda}{2}\norm{w}^2,
\end{equation}
where $\lambda > 0$ is the strength of the regularization parameter. Unfortunately, evaluating $\log p(y_i|x_i,w)$ is expensive due to the summation over all possible configurations $y'$. For example, in chain-structured models the forward-backward algorithm is used to  compute $\log p(y_i|x_i,w)$ and its gradient. A second problem with solving~\eqref{eq:CRF} is that the number of training examples $n$ in applications is constantly-growing, and thus we would like to use methods that only require a few passes through the data set. 

\section{Related Work}

\citet{lafferty2001conditional} proposed an iterative scaling algorithm to solve problem~\eqref{eq:CRF}, but this proved to be inferior to generic deterministic optimization strategies like the limited-memory quasi-Newton algorithm L-BFGS~\citep{wallach2002efficient,sha2003shallow}. The bottleneck in these methods is that we must evaluate $\log p(y_i|x_i,w)$ and its gradient for all $n$ training examples on every iteration. This is  very expensive for problems where $n$ is very large, so to deal with this problem stochastic gradient methods were examined~\citep{vishwanathan2006accelerated,finkel-kleeman-manning:2008:ACLMain}. However, traditional stochastic gradient methods require $O(1/\epsilon)$ iterations rather than the much smaller  $O(\log(1/\epsilon))$ required by deterministic methods. 

There have been several attempts at improving the cost of deterministic methods or the convergence rate of stochastic methods. For example, the exponentiated gradient method of~\citet{collins2008exponentiated} processes the data online and only requires $O(\log(1/\epsilon))$ iterations to reach an accuracy of $\epsilon$ in terms of the dual objective. However, this does not guarantee good performance in terms of the primal objective or the weight vector. Although this method is highly-effective if $\lambda$ is very large, our experiments and the experiments of others show that the performance of online exponentiated gradient can degrade substantially if a small value of $\lambda$ is used (which may be required to achieve the best test error), see~\citet[Figures 5-6 and Table 3]{collins2008exponentiated} and~\citet[Figure 1]{blockFrankWolfe}. In contrast, SAG degrades more gracefully as $\lambda$ becomes small, even achieving a convergence rate faster than classic SG methods when $\lambda=0$~\citep{schmidt2013finite}.
\citet{lavergne2010practical} consider using multiple processors and vectorized computation to reduce the high iteration cost of quasi-Newton methods, but when $n$ is enormous these methods still have a high iteration cost.~\citet{friedlander2011hybrid} explore a hybrid deterministic-stochastic method that slowly grows the number of examples that are considered in order to achieve an $O(\log(1/\epsilon))$ convergence rate with a decreased cost compared to deterministic methods.

Below we state the convergence rates of different methods for training CRFs, including the fastest known rates for deterministic algorithms (like L-BFGS and accelerated gradient)~\citep{nesterov2004introductory}, stochastic algorithms (like [averaged] stochastic gradient and AdaGrad)~\citep{ghadimi2012optimal}, online exponentiated gradient, and SAG. Here $L$ is the Lipschitz constant of the gradient of the objective, $\mu$ is the strong-convexity constant (and we have $\lambda \leq \mu \leq L$), and $\sigma^2$ bounds the variance of the gradients.\\
\begin{center}
\begin{tabular}{llc}
Deterministic: & $O(n\sqrt{\frac{L}{\mu}}\log(1/\epsilon))$ & (primal)\\
Online EG & $O((n + \frac{L}{\lambda})\log(1/\epsilon))$ & (dual)\\
Stochastic & $O(\frac{\sigma^2}{\mu\epsilon}+\sqrt{\frac{L}{\mu\epsilon}})$ & (primal)\\
SAG & $O((n + \frac{L}{\mu})\log(1/\epsilon))$ & (primal)
\end{tabular}
\end{center}

\section{Stochastic Average Gradient}
\label{sec:SAG}

~\citet{roux2012stochastic} introduce the SAG algorithm, a simple method with the low iteration cost of stochastic gradient methods but that only requires $O(\log(1/\epsilon))$ iterations. To motivate this new algorithm, we write the classic gradient descent iteration as
\begin{equation}
w^{t+1} = w^t - \frac{\alpha}{n}\sum_{i=1}^n s_i^t,
\label{eq:SAG}
\end{equation}
where $\alpha$ is the step-size and at each iteration we set the `slope' variables $s_i^t$ to the gradient with respect to training example $i$ at $w^t$, so that $s_i^t = -\nabla \log p(y_i|x_i,w^t) + \lambda w^t$. The SAG algorithm uses this same iteration, but instead of updating $s_i^t$ for all $n$ data points on every iterations, it simply sets $s_i^t = -\nabla \log p(y_i|x_i,w^t) + \lambda w^t$ for \emph{one randomly chosen} data point and keeps the remaining $s_i^t$ at their value from the previous iteration. Thus the SAG algorithm is a randomized version of the gradient algorithm where we use the gradient of each example from the last iteration where it was selected.
The surprising aspect of the work of~\citet{roux2012stochastic} is that this simple \emph{delayed} gradient algorithm achieves a similar convergence rate to the classic full gradient algorithm despite the iterations being $n$ times faster.

\subsection{Implementation for CRFs}

Unfortunately, a major problem with applying~\eqref{eq:SAG} to CRFs is the requirement to store the $s_i^t$. While the CRF gradients $\nabla \log p(y_i|x_i,w^t)$ have a nice structure (see Section~\ref{sec:memory}), $s_i^t$ includes $\lambda w^t$ for some previous $t$, which is dense and unstructured. To get around this issue, instead of using~\eqref{eq:SAG} we use the following SAG-like update~\citep[Section 4]{roux2012stochastic}
\begin{align}
w^{t+1} & = w^t - \alpha(\frac{1}{m}\sum_{i=1}^n g_i^t + \lambda w^t)\nonumber\\
 & = w^t - \alpha(\frac{1}{m}d + \lambda w^t)\nonumber\\
& = (1 - \alpha\lambda)w^t - \frac{\alpha}{m}d,
 \label{eq:SAG2}
\end{align}
where $g_i^t$ is the value of $-\nabla \log p(y_i|x_i,w^k)$ for the last iteration $k$ where $i$ was selected and $d$ is the sum of the $g_i^t$ over all $i$. Thus, this update uses the exact gradient of the regularizer and only uses an approximation for the (structured) CRF log-likelihood gradients. Since we don't yet have any information about these log-likelihoods at the start, we initialize the algorithm by setting $g_i^0 = 0$. But to compensate for this, we track the number of examples seen $m$, and normalize $d$ by $m$ in the update (instead of $n$). In Algorithm~\ref{alg:SAG}, we summarize this variant of the SAG algorithm for training CRFs.\footnote{If we solve the problem for a sequence of regularization parameters, we can obtain better performance by warm-starting $g_i^0$, $d$, and $m$.}

\begin{algorithm}
\begin{algorithmic}[1]
    \REQUIRE $\{x_i,y_i\}$, $\lambda$, $w$, $\delta$
\STATE $m \leftarrow 0$, $g_i \leftarrow 0$ for $i=1,2,\dots, n$
\STATE $d\leftarrow0$, $L_g\leftarrow1$
  \WHILE{$m < n$ and $\norm{\frac{1}{n}d + \lambda w}_\infty \geq \delta$}
\STATE Sample $i$ from $\{1,2,\dots,n\}$
\STATE $f \leftarrow - \log p(y_i|x_i,w)$
\STATE $g \leftarrow - \nabla \log p(y_i|x_i,w)$
\IF{this is the first time we sampled $i$}
\STATE $m\leftarrow m+1$
\ENDIF
\\ Subtract old gradient $g_i$, add new gradient $g$:
\STATE $d \leftarrow d - g_i +  g$
\\ Replace old gradient of example $i$:
\STATE $g_i \leftarrow g$
\IF{$\norm{g_i}^2 > 10^{-8}$}
\STATE $L_g \leftarrow $lineSearch$(x_i,y_i,f,g_i,w,L_g)$
\ENDIF
\STATE $\alpha \leftarrow 1/(L_g + \lambda)$
  \STATE $w \leftarrow (1-\alpha\lambda)w - \frac{\alpha}{m}d$
\STATE $L_g \leftarrow L_g \cdot 2^{-1/n}$
\ENDWHILE
\end{algorithmic}
\caption{SAG algorithm for training CRFs}
\label{alg:SAG}
\end{algorithm}

In many applications of CRFs the $g_i^t$ are very sparse, and we would like to take advantage of this as in stochastic gradient methods. Fortunately, we can implement~\eqref{eq:SAG2} without using dense vector operations by using the representation $w^t = \beta^tv^t$ for a scalar $\beta^t$ and a vector $v^t$, and using `lazy updates' that apply $d$ repeatedly to an individual variable when it is needed~\citep{roux2012stochastic}. 

Also following~\citet{roux2012stochastic}, we set the step-size to $\alpha = 1/L$, where $L$ is an approximation to the maximum Lipschitz constant of the gradients. This is the smallest number $L$ such that
\begin{equation}
\norm{\nabla f_i(w) - \nabla f_i(v)} \leq L\norm{w-v},
\label{eq:L}
\end{equation}
for all $i$, $w$, and $v$. This quantity is a bound on how fast the gradient can change as we change the weight vector. The Lipschitz constant with respect to the gradient of the regularizer is simply $\lambda$. This gives $L \leq L_g + \lambda$, where $L_g$ is the Lipschitz constant of the gradient of the log-likelihood. Unfortunately, $L_g$ depends on the covariance of the CRF and is typically too expensive to compute. To avoid this computation, as in~\citet{roux2012stochastic} we approximate $L_g$ in an online fashion using the standard backtracking line-search given by Algorithm~\ref{alg:LS}~\citep{beck2009fast}. The test used in this algorithm is faster than testing~\eqref{eq:L}, since it uses function values (which only require the forward algorithm for CRFs) rather than gradient values (which require the forward and backward steps). Algorithm~\ref{alg:LS} monotonically increases $L_g$, but we also slowly decrease it in Algorithm~\ref{alg:SAG} in order to allow the possibility that we can use a more aggressive step-size as we approach the solution.
\begin{algorithm}[H]
\begin{algorithmic}[1]
    \REQUIRE $x_i,y_i,f,g_i,w,L_g$.
  \STATE $f' = -\log p(y_i|x_i,w - \frac{1}{L_g}g_i)$
\WHILE{$f' \geq f - \frac{1}{2L_g}\norm{g_i}^2$}
\STATE $L_g = 2L_g$
\STATE $f' = -\log p(y_i|x_i,w - \frac{1}{L_g}g_i)$
\ENDWHILE
\RETURN $L_g$.
\end{algorithmic}
\caption{Lipschitz line-search algorithm}
\label{alg:LS}
\end{algorithm}
\vspace{-10pt}
Since the solution is the only stationary point, we must have $\nabla f(w^t) = 0$ at the solution. Further, the value $\frac{1}{n}d + \lambda w^t$ converges to $\nabla f(w^t)$ so we can use the size of this value to decide when to stop the algorithm (although we also require that $m=n$ to avoid premature stopping before we have seen the full data set).  This is in contrast to classic stochastic gradient methods, where the step-size must go to zero and it is therefore difficult to decide if the algorithm is close to the optimal value or if we simply require a small step-size to continue making progress.

\subsection{Reducing the Memory Requirements}
\label{sec:memory}

Even if the gradients $g_i^t$ are not sparse, we can often reduce the memory requirements of Algorithm~\ref{alg:SAG} because it is known that the CRF gradients only depend on $w$ through marginals of the features. Specifically, the gradient of the log-likelihood under model~\eqref{eq:logLinear} with respect to feature $j$ is given by
\begin{align*}
\nabla_j \log p(y|x,w) & = F_j(x,y) - \frac{\sum_{y'}\exp(F(x,y'))F_j(x,y')}{\sum_{y'}\exp(F(x,y'))}\\
& = F_j(x,y) - \sum_{y'}p(y'|x,w)F_j(x,y')\\
& = F_j(x,y) - \mathbb{E}_{y'|x,w}[F_j(x,y')]
\end{align*}
Typically, each feature $j$ only depends on a small `part' of $y$. For example, we typically include features of the form $F_j(x,y) = F(x)\mathbb{I}[y_k = s]$ for some function $F$, where $k$ is an element of $y$ and $s$ is a discrete state that $y_k$ can take. In this case, the gradient can be written in terms of the marginal probability of element $y_k$ taking state $s$,
\begin{align*}
\nabla_j \log p(y|x,w) & = F(x)\mathbb{I}[y_k = s] - \mathbb{E}_{y'|x,w}[F(x)\mathbb{I}[y_k = s]]\\
& = F(x)(\mathbb{I}[y_k = s] - \mathbb{E}_{y'|x,w}[\mathbb{I}[y_k = s])\\
& = F(x)(\mathbb{I}[y_k = s] - p(y_k=s|x,w)).
\end{align*}
Notice that Algorithm~\ref{alg:SAG} only depends on the old gradient through its difference with the new gradient (line 10), which in this example gives
\begin{align*}
\nabla_j \log p(y|x,w) - \nabla_j \log p(y|x,w_\text{old}) = 
 F(x)(p(y_k=s|x,w_{\text{old}}) - p(y_k=s|x,w)),
\end{align*}
where $w$ is the current parameter vector and $w_{\text{old}}$ is the old parameter vector. Thus, to perform this calculation the only thing we need to know about $w_{\text{old}}$ is the unary marginal $p(y_k=s|x,w_\text{old})$, which will be \emph{shared} across features that only depend on the event that $y_k=s$. Similarly, features that depend on pairs of values in $y$ will need to store pairwise marginals, $p(y_k=s,y_k'=s'|x,w_\text{old})$.
For general pairwise graphical model structures, the memory requirements to store these marginals will thus be $O(VK + EK^2)$, where $V$ is the number of vertices and $E$ is the number of edges. 
This can be an enormous reduction since \emph{it does not depend on the number of features}. 
 Further, since computing these marginals is a by-product of computing the gradient, this potentially-enormous reduction in the memory requirements comes at no extra computational cost.

\section{Non-Uniform Sampling}
\label{sec:NUS}

Recently, several works show that we can improve the convergence rates of randomized optimization algorithms by using non-uniform sampling (NUS) schemes. This includes  randomized Kaczmarz~\citep{strohmer2009randomized}, randomized coordinate descent~\citep{nesterov2012efficiency}, and stochastic gradient methods~\citep{SGkaczmarz}.
The key idea behind all of these NUS strategies is to \emph{bias the sampling towards the Lipschitz constants of the gradients}, so that gradients that change quickly get sampled more often and gradients that change slowly get sampled less often. Specifically, we maintain a Lipschitz constant $L_i$ for each training example $i$ and, instead of the usual sampling strategy $p_i = 1/n$, we bias towards the distribution $p_i = L_i/\sum_j L_j$. In these various contexts, NUS allows us to improve the dependence on the values $L_i$ in the convergence rate, since the NUS methods depend on $\bar{L} = (1/n)\sum_j L_j$, which may be substantially smaller than the usual dependence  on $L = \max_j\{ L_j\}$.~\citet{schmidt2013finite} argue that faster convergence rates might be achieved with NUS for SAG since it allows a larger step size $\alpha$ that depends on $\bar{L}$ instead of $L$.\footnote{An interesting difference between the SAG update with NUS and NUS for stochastic gradient methods is that the SAG update does not seem to need to decrease the step-size for frequently-sampled examples (since the SAG update does not rely on using an unbiased gradient estimate).}


 The scheme for SAG proposed by~\citet[][Section 5.5]{schmidt2013finite} uses a fairly complicated adaptive NUS scheme and step-size, but the key ingredient is estimating each constant $L_i$ using Algorithm~\ref{alg:LS}. Our experiments show this method often already improves on state of the art methods for training CRFs by a substantial margin, but we found we could obtain improved performance for training CRFs using the following simple NUS scheme for SAG: as in~\citet{SGkaczmarz}, with probability $0.5$ choose $i$ uniformly and with probability $0.5$ sample $i$ with probability $L_i/(\sum_j L_j)$ (restricted to the examples we have previously seen).\footnote{\citet{SGkaczmarz} analyze the basic stochastic gradient method and thus require $O(1/\epsilon)$ iterations.} We also use a step-size of $\alpha = \frac{1}{2}\left(1/L + 1/\bar{L}\right)$, since the  faster convergence rate with NUS is due to the ability to use a larger step-size than $1/L$. 
This simple step-size and sampling scheme contrasts with the more complicated choices described by~\citet[][Section 5.5]{schmidt2013finite}, that make the degree of non-uniformity grow with the number of examples seen $m$. 
This prior work initializes each $L_i$ to $1$, and updates $L_i$ to $0.5L_i$ each subsequent time an example is chosen. In the context of CRFs, this leads to a large number of expensive backtracking iterations. To avoid this, we initialize $L_i$ with $0.5\bar{L}$ the first time an example is chosen, and decrease $L_i$ to $0.9L_i$ each time it is subsequently chosen. Allowing the $L_i$ to decrase seems crucial to obtaining the best practical performance of the method, as it allows the algorithm to take bigger step sizes if the values of $L_i$ are small near the solution.

\subsection{Convergence Analysis under NUS}

\citet{schmidt2013finite} give an intuitive but non-rigorous motivation for using NUS in SAG. More recently,~\citet{xiao2014proximal} show that NUS gives a dependence on $\bar{L}$ in the context of a related algorithm that uses occasional full passes through the data (which substantially simplifies the analysis). 
Below, we analyze a NUS extension of the SAGA algorithm of~\citet{defazio2014saga}, which does not require full passes through the data and has similar performance to SAG in practice but is much easier to analyze.
\begin{proposition}
Let the sequences $\{w^t\}$ and $\{s_j^t\}$ be defined by
\begin{align*}
w^{t+1} & = w^t - \alpha\left[\frac{1}{np_{j_t}}  ( \nabla f_{j_t}(w^t) - s_{j_t}^t) + \frac{1}{n}\sum_{i=1}^ns_i^t\right],\\
s_j^{t+1} & = \begin{cases}\nabla f_{r_t}(w^t) & \text{if $j = r_t$,}\\
s_j^t & \text{otherwise.}
\end{cases}
\end{align*}
where $j_t$ is chosen with probability $p_j$.

(a) If $r_t$ is set to $j_t$, then with $\alpha = \frac{np_\text{min}}{4L + n\mu}$ we have
\[
\mathbb{E}[\norm{w^t-w^*}^2] \leq \left(  1 - \mu\alpha \right)^t \left[\norm{x^0 - x^*} + C_a\right],
\]
where $p_\text{min} = \min_i\{p_i\}$ and
\[
C_a =  \frac{2p_\text{min}}{(4L+n\mu)^2}\sum_{i=1}^n\frac{1}{p_i}\norm{\nabla f_i(x^0) - \nabla f_i(x^*)}^2.
\]
(b) If $p_j = \frac{L_j}{\sum_{i=1}^nL_i}$ and $r_t$ is chosen uniformly at random, then with $\alpha = \frac{1}{4\bar{L}}$ we have
\[
\mathbb{E}[\norm{w^t-w^*}^2] \leq \left(  1 - \min\left\{\frac{1}{3n},\frac{\mu}{8\bar{L}}\right\} \right)^t \left[\norm{x^0 - x^*} + C_b\right],
\]
where:
\[
C_b =  \frac{n}{2\bar{L}}\left[f(x^0) - f(x^*)\right]
\]
\end{proposition}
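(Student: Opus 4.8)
The plan is to adapt the standard Lyapunov-function argument for SAGA-type methods (as in \citet{defazio2014saga}) to the non-uniform setting. I would introduce a Lyapunov function of the form
\[
T^t = \norm{w^t - w^*}^2 + \frac{c}{n}\sum_{i=1}^n a_i \norm{s_i^t - \nabla f_i(w^*)}^2,
\]
where the coefficients $a_i$ and the constant $c > 0$ are to be chosen depending on the sampling probabilities $p_i$ (for part (a), $a_i = 1/p_i$ appears natural given the form of $C_a$; for part (b), uniform $a_i$ together with the convexity-based bound $\sum_i \norm{s_i^t - \nabla f_i(w^*)}^2 \le 2L\,[\,\cdot\,]$ is what converts the dual term into the $f(x^0)-f(x^*)$ form of $C_b$). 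The goal is to show $\mathbb{E}[T^{t+1} \mid \mathcal{F}_t] \le (1-\rho) T^t$ for the claimed contraction factor $\rho$, from which the bounds follow by iterating and noting $\norm{w^t-w^*}^2 \le T^t$ and $T^0$ equals the bracketed quantity (the $\norm{x^0-x^*}$ term should be $\norm{x^0-x^*}^2$).

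The key steps, in order: (i) expand $\mathbb{E}[\norm{w^{t+1}-w^*}^2 \mid \mathcal{F}_t]$ using the update rule, and verify that the search direction $g^t := \frac{1}{np_{j_t}}(\nabla f_{j_t}(w^t) - s_{j_t}^t) + \frac{1}{n}\sum_i s_i^t$ is an unbiased estimate of $\nabla f(w^t)$ under the sampling distribution $p_j$ — this is exactly why the $1/(np_{j_t})$ weighting appears; (ii) bound the second-moment term $\mathbb{E}[\norm{g^t}^2 \mid \mathcal{F}_t]$ via the usual $\norm{a+b}^2 \le 2\norm{a}^2 + 2\norm{b}^2$ split, then control $\mathbb{E}[\frac{1}{n^2 p_{j_t}^2}\norm{\nabla f_{j_t}(w^t) - s_{j_t}^t}^2]$ — here the $1/p_{j_t}^2$ inside, against the probability $p_{j_t}$ outside, leaves a $1/p_{j_t}$, which is why $1/p_i$ (resp.\ $L_i$-weighting) shows up and why $\alpha$ must scale with $p_\text{min}$ (resp.\ $1/\bar L$); (iii) use $\mu$-strong convexity / co-coercivity of each $\nabla f_i$ (Lipschitz constant $L$ uniformly in (a), $L_i$ in (b)) to produce the negative $-\mu\alpha\norm{w^t-w^*}^2$ term and to relate $\nabla f_{j_t}(w^t) - \nabla f_{j_t}(w^*)$ to a decrease in the objective; (iv) compute $\mathbb{E}[\norm{s_i^{t+1} - \nabla f_i(w^*)}^2 \mid \mathcal{F}_t]$, which since $r_t = j_t$ in (a) is $p_i\norm{\nabla f_i(w^t)-\nabla f_i(w^*)}^2 + (1-p_i)\norm{s_i^t-\nabla f_i(w^*)}^2$ (and is the uniform analogue with $r_t$ uniform in (b)); (v) assemble everything and choose $c$, and then $\alpha$ and $\rho$, so that the cross terms cancel and all coefficients are nonpositive — this yields the stated $\alpha$ and the contraction factor $1-\mu\alpha$ in (a), and the $\min\{1/(3n), \mu/(8\bar L)\}$ in (b).

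I expect the main obstacle to be step (v) for part (b): unlike part (a), where choosing $r_t = j_t$ makes the dual-variable recursion align cleanly with the importance weights $1/p_i$, in part (b) the sampling of $j_t$ is non-uniform ($p_j = L_j/\sum_i L_i$) but the dual update index $r_t$ is uniform, so the ``refresh rate'' of $s_i$ (namely $1/n$) is decoupled from the rate at which $f_i$'s gradient enters the step ($p_i$). Balancing these two different rates is what forces the $\min$ of two terms in the convergence factor and requires a careful, somewhat delicate choice of the Lyapunov weight $c$; getting the constants ($4\bar L$, $1/(3n)$, $\mu/(8\bar L)$, and the $n/(2\bar L)$ in $C_b$) to all be mutually consistent is the fiddly part. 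The use of $\sum_i \norm{\nabla f_i(w^t) - \nabla f_i(w^*)}^2 \le 2L[f(w^t) - f(w^*) - \langle \nabla f(w^*), w^t-w^*\rangle]$ to express $C_b$ in terms of the function-value gap (rather than gradient norms as in $C_a$) is the other piece that needs care, since it trades a tight per-example bound for a uniform $L$ bound and must be done in a way that does not destroy the $\bar L$ dependence elsewhere.
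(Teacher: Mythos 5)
Your plan for part (a) is essentially the paper's proof. The paper's Lyapunov function is $\frac{1}{n}\sum_{i}\frac{1}{np_i}\|\nabla f_i(\phi_i^k)-\nabla f_i(x^*)\|^2+c\|x^k-x^*\|^2$ (your $a_i=1/p_i$, with the free constant placed on the other term), and it proceeds exactly through your steps (i)--(v): unbiasedness, the $\|a+b\|^2\le 2\|a\|^2+2\|b\|^2$ variance split in which one factor of $1/p_j$ survives the expectation, the strong-convexity inequality $\langle \nabla f(x),x^*-x\rangle\le-\frac{\mu}{2}\|x-x^*\|^2-\frac{1}{2Ln}\sum_i\|\nabla f_i(x)-\nabla f_i(x^*)\|^2$, the refresh expectation $p_i\|\nabla f_i(w^t)-\nabla f_i(w^*)\|^2+(1-p_i)\|s_i^t-\nabla f_i(w^*)\|^2$, and then solving for $c$, $\kappa$ and $\alpha$. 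You also correctly note that the bracket should read $\|x^0-x^*\|^2$.

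For part (b) there is a genuine gap in your proposed Lyapunov function. You suggest keeping gradient-norm terms $\sum_i a_i\|s_i^t-\nabla f_i(w^*)\|^2$ with \emph{uniform} $a_i$ and converting to function values at the end via a uniform-$L$ co-coercivity bound. The difficulty is that the only negative quantity available to absorb the refresh term $\frac{1}{n}\sum_i\|\nabla f_i(w^t)-\nabla f_i(w^*)\|^2$ and the variance terms (which under $p_i\propto L_i$ come out weighted by $\bar L/L_i$) is the per-example co-coercivity inequality, which yields $-\frac{1}{4n}\sum_i\frac{1}{L_i}\|\nabla f_i(w^t)-\nabla f_i(w^*)\|^2$. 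Cancelling an unweighted sum against a $1/L_i$-weighted one forces a factor of $\max_i L_i$ and destroys precisely the $\bar L$ dependence the proposition asserts; a uniform-$L$ conversion at the end has the same effect. The paper avoids this by switching, for part (b) only, to the Bregman-divergence Lyapunov function $\frac{1}{n}\sum_i\bigl[f_i(\phi_i^k)-f_i(x^*)-\langle\nabla f_i(x^*),\phi_i^k-x^*\rangle\bigr]+c\|x^k-x^*\|^2$: under uniform $r_t$ its refresh expectation is exactly $\frac{1}{n}\bigl[f(x^k)-f(x^*)\bigr]$ plus $(1-\frac{1}{n})$ times the old value, every $1/L_i$-weighted gradient sum produced by the variance bound is dominated by this Bregman term via $\frac{1}{n}\sum_i\frac{1}{L_i}\|\nabla f_i(\phi_i)-\nabla f_i(x^*)\|^2\le\frac{2}{n}\sum_i\bigl[f_i(\phi_i)-f_i(x^*)-\langle\nabla f_i(x^*),\phi_i-x^*\rangle\bigr]$, and $C_b$ falls out at $k=0$ with no lossy conversion. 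If you insist on gradient norms you would at minimum need $a_i=1/L_i$ rather than uniform weights, and the constants $1/(3n)$, $\mu/(8\bar L)$ and $n/(2\bar L)$ would have to be re-derived for that choice.
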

 This result (which we prove in Appendix A and B) shows that SAGA has (a) a linear convergence rate for any NUS scheme where $p_i > 0$ for all $i$, and (b) a rate depending on $\bar{L}$ by sampling proportional to the Lipschitz constants and also generating a uniform sample. However, (a) achieves the fastest rate when $p_i = 1/n$ while (b) requires two samples on each iteration. We were not able to show a faster rate using only one sample on each iteration as used in our implementation.

\subsection{Line-Search Skipping}

To reduce the number of function evaluations required by the NUS strategy, we also explored a line-search \emph{skipping} strategy. The general idea is to consider skipping the line-search for example $i$ if the line-search criterion was previously satisfied for example $i$ without backtracking. Specifically, if the line-search criterion was satisfied $\xi$ consecutive times for example $i$ (without backtracking), then we do not do the line-search on the next $2^{\xi-1}$ times example $i$ is selected (we also do not multiply $L_i$ by $0.9$ on these iterations). This drastically reduces the number of function evaluations required in the later iterations.

\section{Experiments}
\label{sec:expts}

We compared a wide variety of approaches on four CRF training tasks: the optical character recognition (OCR) dataset of~\citet{tasker2003max}, the CoNLL-2000 shallow parse chunking dataset,\footnote{http://www.cnts.ua.ac.be/conll2000/chunking} the CoNLL-2002 Dutch named-entity recognition dataset,\footnote{http://www.cnts.ua.ac.be/conll2002/ner} and a part-of-speech (POS) tagging task using the Penn Treebank Wall Street Journal data (POS-WSJ). The optimal character recognition dataset labels the letters in images of words. Chunking segments a sentence into syntactic chunks by tagging each sentence token with a chunk tag corresponding to its constituent type (e.g., `NP', `VP', etc.) and location (e.g., beginning, inside, ending, or outside any constituent). We use standard n-gram and POS tag features~\citep{sha2003shallow}. For the named-entity recognition task, the goal is to identify named entities and correctly classify them as persons, organizations, locations, times, or quantities. We again use standard n-gram and POS tag features, as well as word shape features over the case of the characters in the token.
The POS-tagging task assigns one of 45 syntactic tags to each token in each of the sentences in the data. For this data, we follow the standard division of the WSJ data given by~\citet{Collins:2002}, using sections 0-18 for training, 19-21 for development, and 22-24 for testing. We use the standard set of features following~\citet{ratnaparkhi1996maximum} and~\citet{Collins:2002}: n-gram, suffix, and shape features. As is common on these tasks, our pairwise features do not depend on $x$.

On these datasets we compared the performance of a set of competitive methods, including five variants on classic stochastic gradient methods: \emph{Pegasos} which is a standard stochastic gradient method with a step-size of $\alpha=\eta/\lambda t$ on iteration $t$~\citep{shalev2007pegasos},\footnote{We also tested \emph{Pegasos} with averaging but it always performed worse than the non-averaged version.} a basic stochastic gradient (\emph{SG}) method where we use a constant $\alpha = \eta$, an averaged stochastic gradient (\emph{ASG}) method where we use a constant step-size $\alpha = \eta$ and average the iterations,\footnote{We also tested \emph{SG} and \emph{ASG} with decreasing step-sizes of either $\alpha_t=\eta/\sqrt{t}$ or $\alpha_t=\eta/(\delta+t)$, but these gave worse performance than using a constant step size.} \emph{AdaGrad} where we use the per-variable $\alpha_j = \eta/(\delta + \sqrt{\sum_{i=1}^t\nabla_j \log p(y_i|x_i,w^i)^2})$ and the proximal-step with respect to the $\ell_2$-regularizer~\citep{duchi2010adaptive}, and stochastic meta-descent (\emph{SMD}) where we initialize with $\alpha_j = \eta$ and dynamically update the step-size~\citep{vishwanathan2006accelerated}.
Since setting the step-size is a notoriously hard problem when applying stochastic gradient methods, we let these classic stochastic gradient methods cheat by choosing the $\eta$ which gives the best performance among powers of $10$ on the training data (for SMD we additionally tested the four choices among the paper and associated code of~\citet{vishwanathan2006accelerated}, and we found $\delta=1$ worked well for \emph{AdaGrad}).\footnote{Because of the extra implementation effort required to implement it efficiently, we did not test SMD on the POS dataset, but we do not expect it to be among the best performers on this data set.}
Our comparisons also included a deterministic \emph{L-BFGS} algorithm~\citep{schmidt2005minfunc} and the \emph{Hybrid} L-BFGS/stochastic algorithm of \citet{friedlander2011hybrid}. We also included the online exponentiated gradient (\emph{OEG}) method~\citep{collins2008exponentiated}, and we followed the heuristics in the author's code.\footnote{Specifcially, for OEG we proceed through a random permutation of the dataset on the first pass through the data, we perform a maximum of $2$ backtracking iterations per example on this first pass (and $5$ on subsequent passes), we initialize the per-sample step-sizes to $0.5$ and divide them by $2$ if the dual objective does not increase (and multiply them by $1.05$ after processing the example), and to initialize the dual variables we set parts with the correct label from the training set to $3$ and parts with the incorrect label to $0$.}
Finally, we included the \emph{SAG} algorithm as described in Section 4, the \emph{SAG-NUS} variant of~\citet{schmidt2013finite}, and our proposed \emph{SAG-NUS*} strategy from  Section~\ref{sec:NUS}.\footnote{We also tested \emph{SG} with the proposed NUS scheme, but the performance was similar to the regular SG method. This is consistent with the analysis of~\citet[Corollary 3.1]{SGkaczmarz} showing that NUS for regular \emph{SG} only improves the non-dominant term.} 
We also tested SAGA variants of each of the SAG algorithms, and found that they gave very similar performance. All methods (except OEG) were initialized at zero.

\begin{figure}
\begin{center}
 \fig{.35}{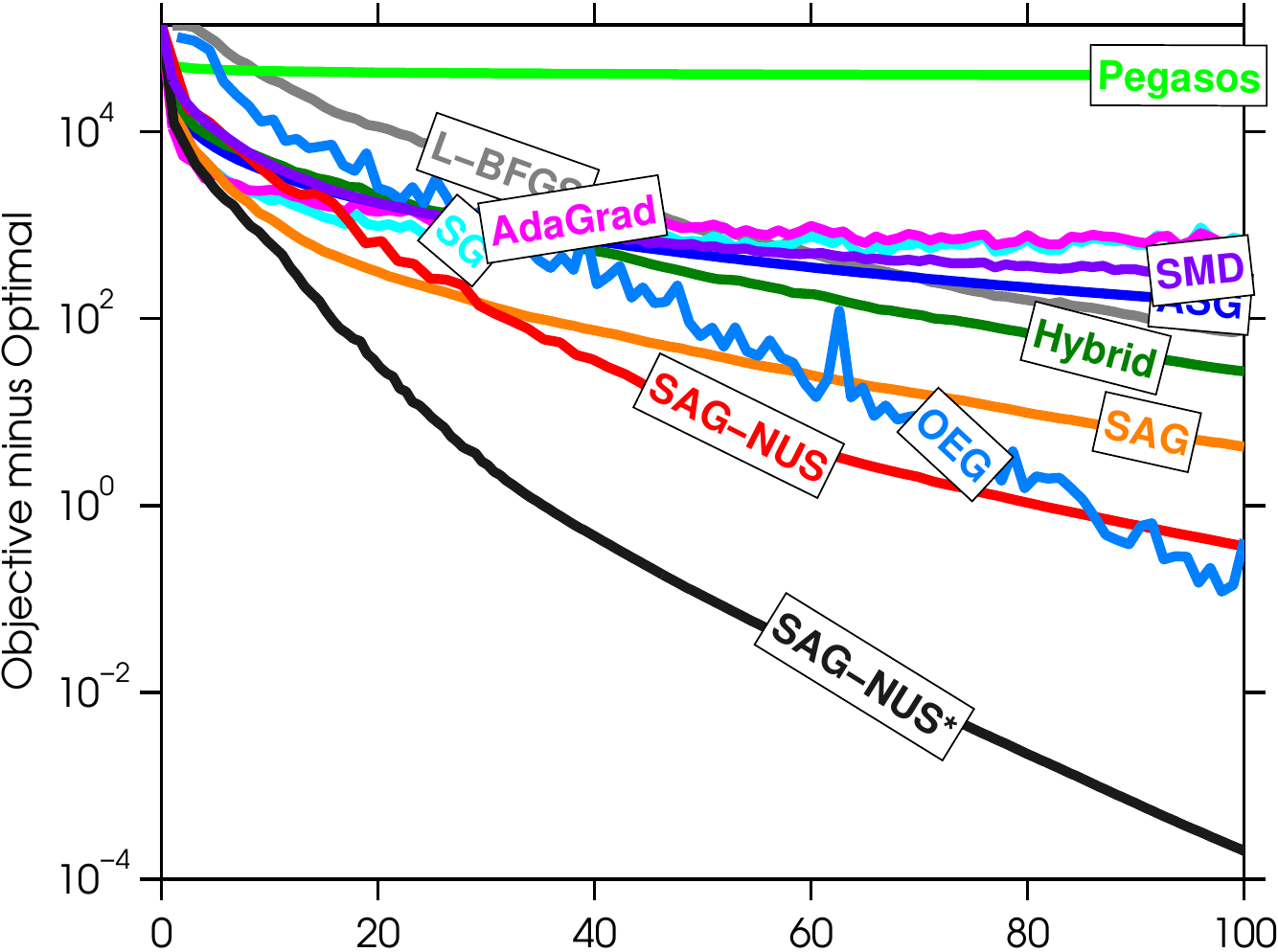}
\fig{.35}{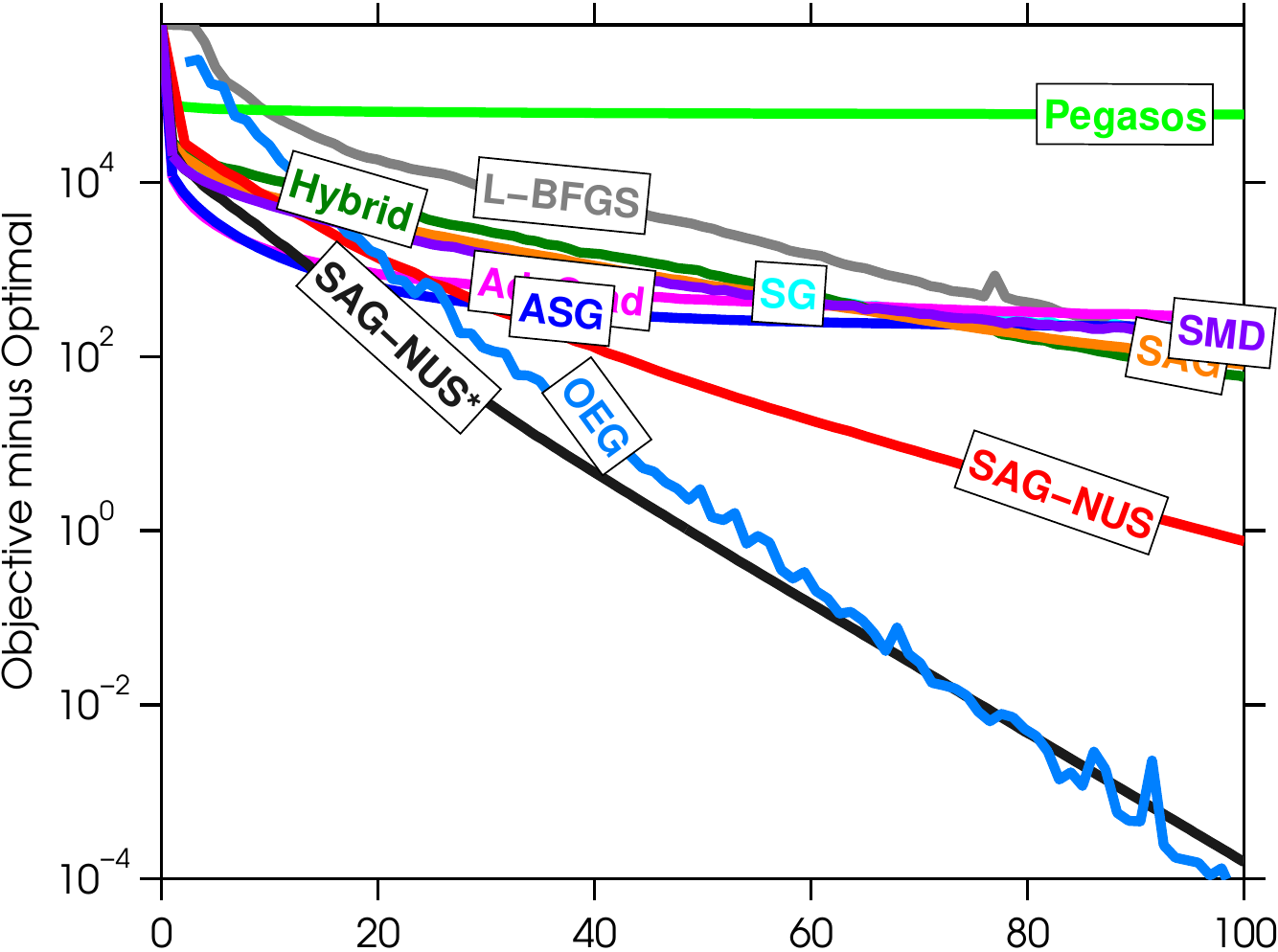}\\
\fig{.35}{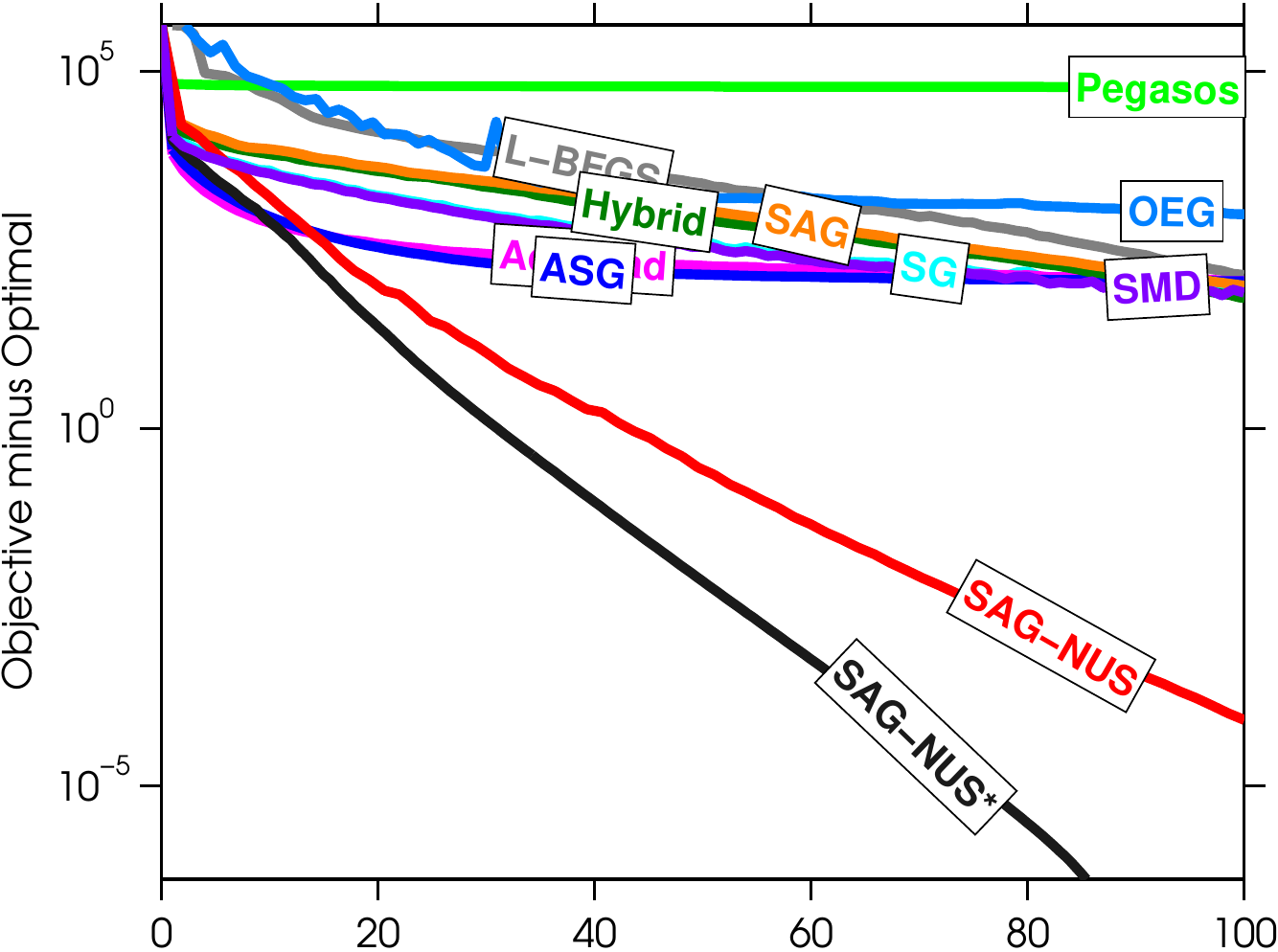}
\fig{.35}{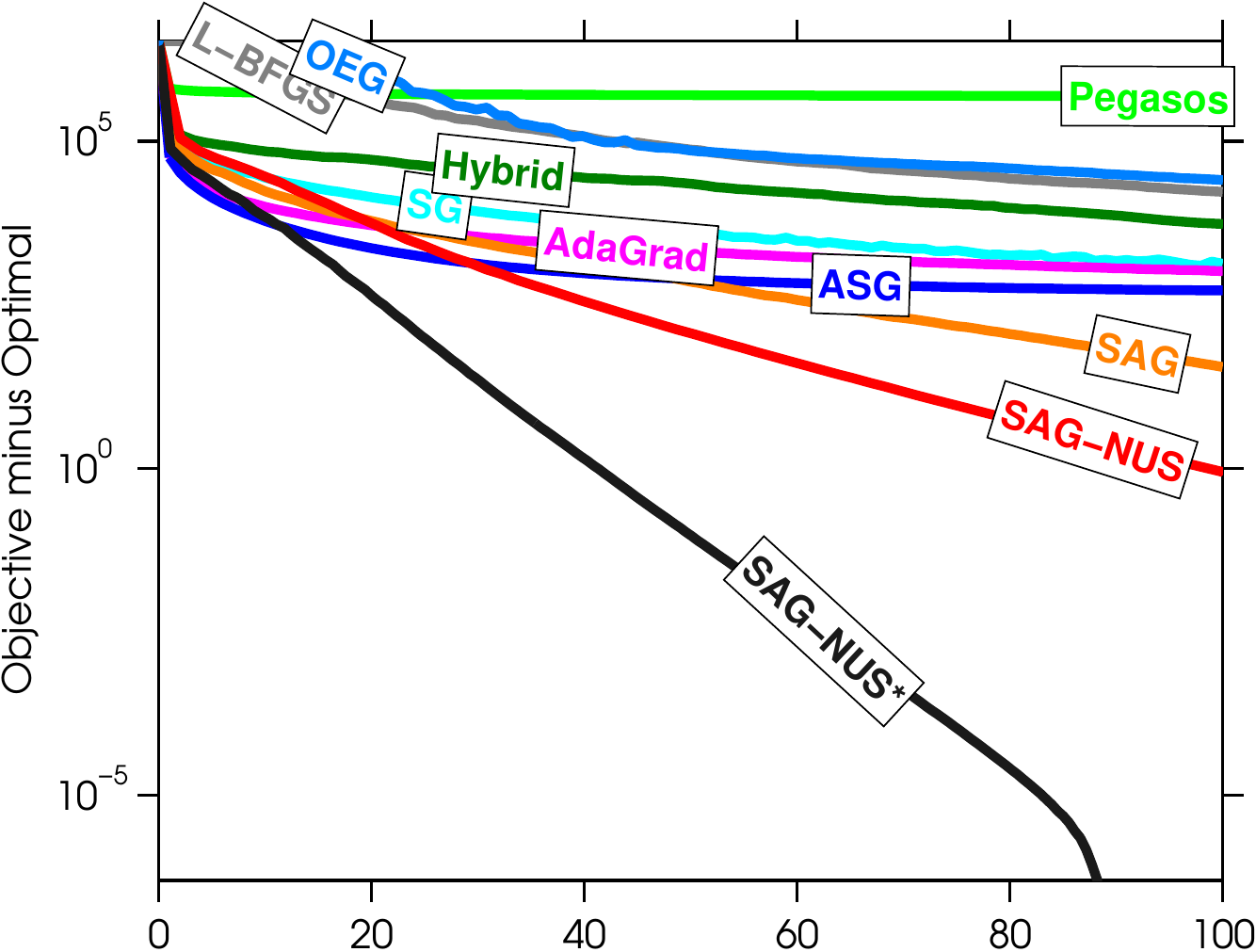} 
\end{center}
\caption{Objective minus optimal objective value against effective number of passes for different deterministic, stochastic, and semi-stochastic optimization strategies. Top-left: OCR, Top-right: CoNLL-2000, bottom-left: CoNLL-2002, bottom-right: POS-WSJ.}
\label{fig:trainPass}
\end{figure}

\begin{figure}[h]
\begin{center}
 \fig{.35}{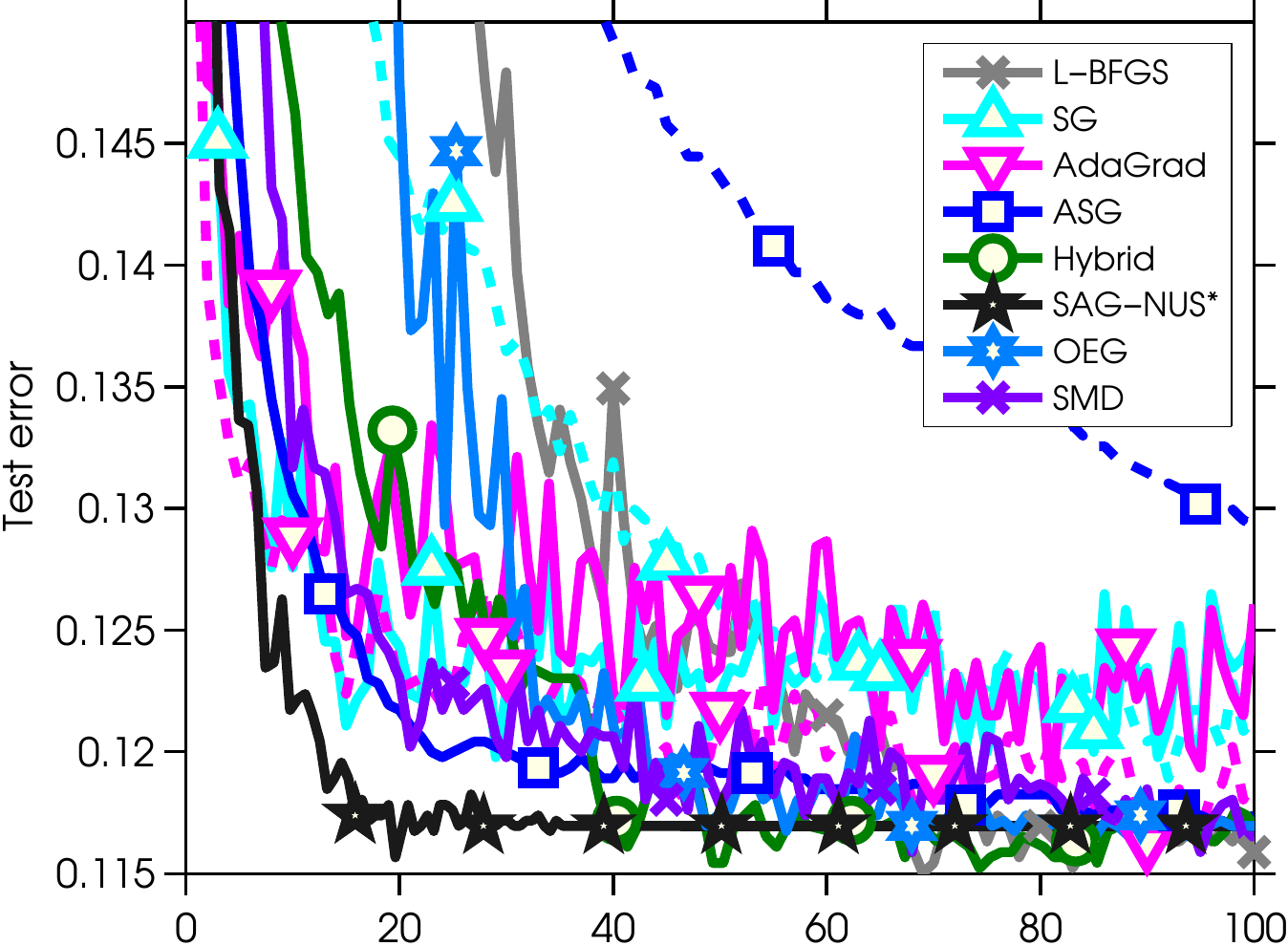}
\fig{.35}{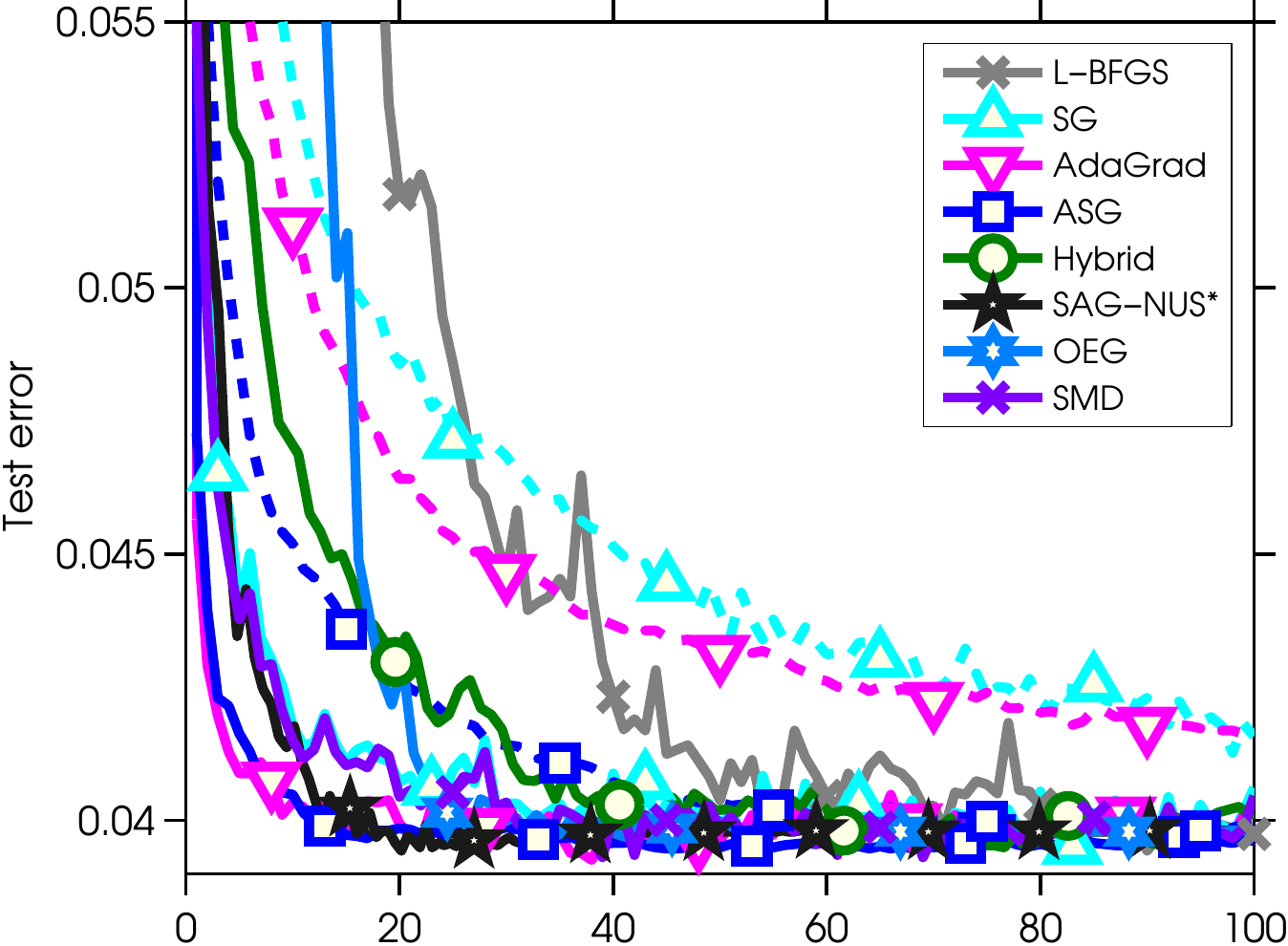}\\
\fig{.35}{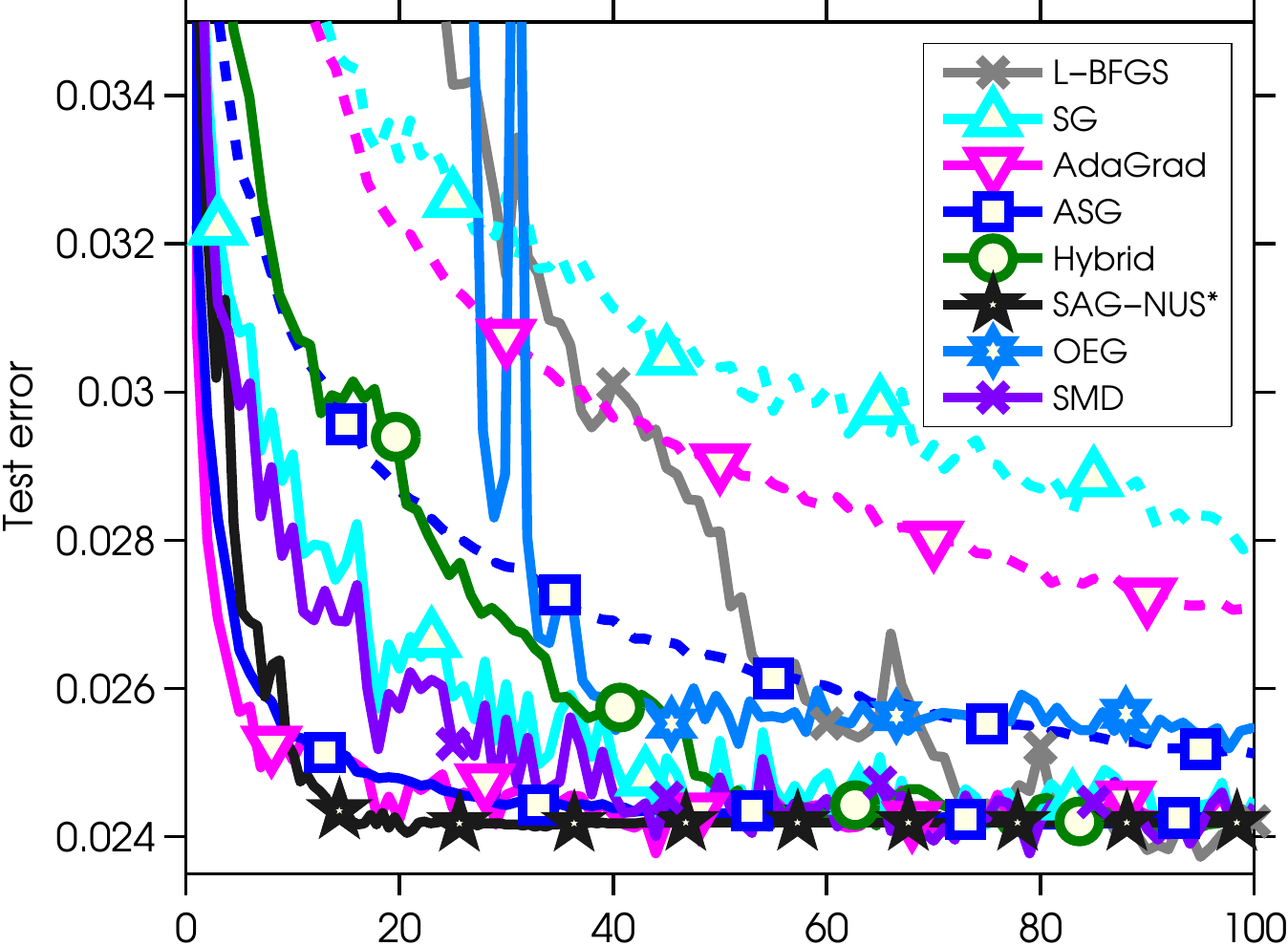}
\fig{.35}{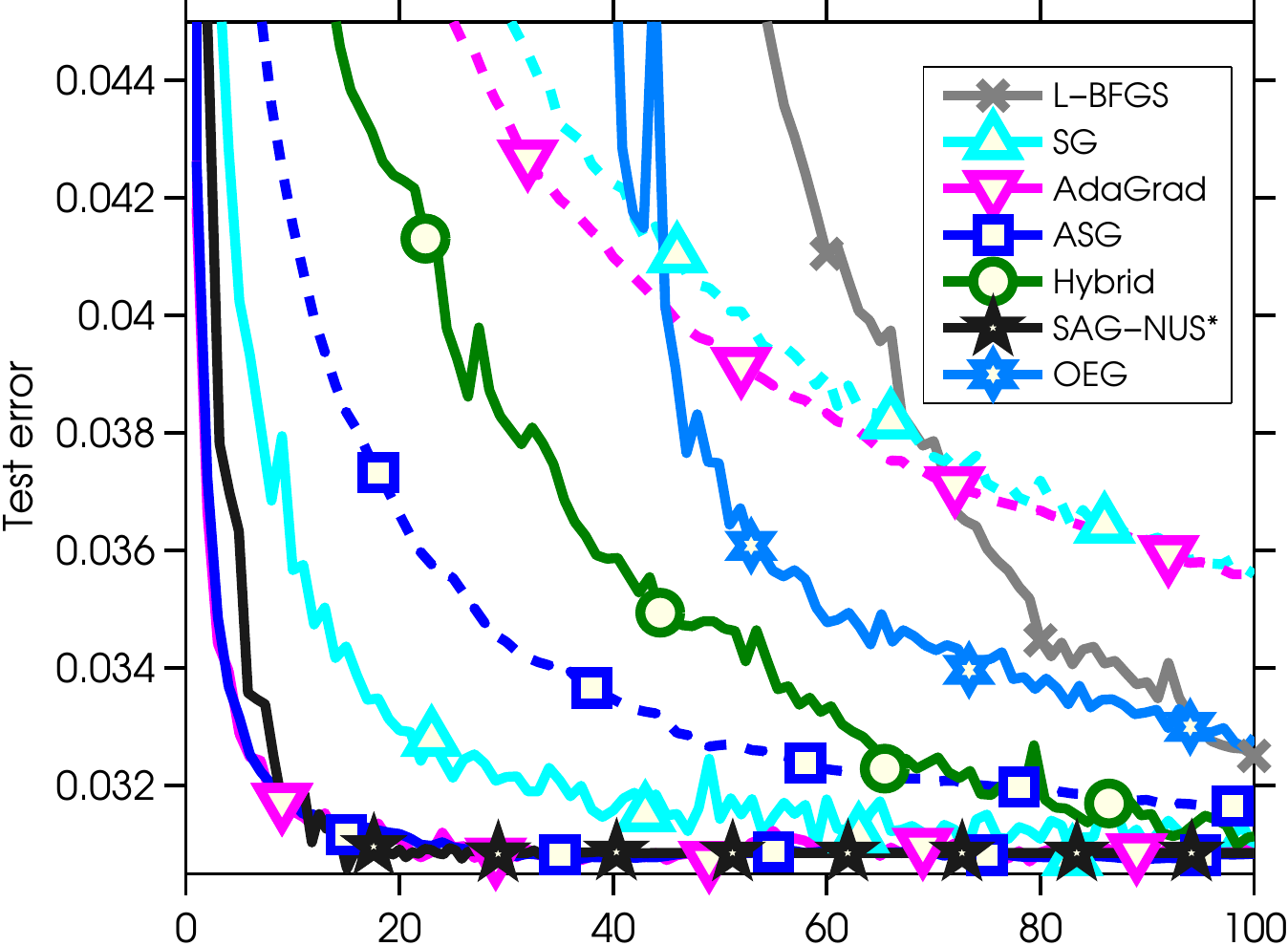} 
\end{center}
\caption{Test error against effective number of passes for different deterministic, stochastic, and semi-stochastic optimization strategies (this figure is best viewed in colour). Top-left: OCR, Top-right: CoNLL-2000, bottom-left: CoNLL-2002, bottom-right: POS-WSJ. The dotted lines show the performance of the classic stochastic gradient methods when the optimal step-size is not used. \emph{Note that the performance of all classic stochastic gradient methods is much worse when the optimal step-size is not used}, whereas the SAG methods have an adaptive step-size so are not sensitive to this choice.}
\label{fig:testPass} 
\end{figure}

Figure~\ref{fig:trainPass} shows the result of our experiments on the training objective and Figure~\ref{fig:testPass} shows the result of tracking the test error. Here we measure the number of `effective passes', meaning $(1/n)$ times the number of times we performed the bottleneck operation of computing $\log p(y_i|x_i,w)$ and its gradient. This is an implementation-independent way to compare the convergence of the different algorithms (most of whose runtimes differ only by a small constant), but we have included the performance in terms of runtime in Appendix E. For the different \emph{SAG} methods that use a line-search we count the extra `forward' operations used by the line-search as full evaluations of $\log p(y_i|x_i,w)$ and its gradient, even though these operations are cheaper because they do not require the backward pass nor computing the gradient. In these experiments we used $\lambda = 1/n$, which yields a value close to the optimal test error across all data sets.
 The objective is strongly-convex and thus has a unique minimum value. We approximated this value by running L-BFGS for up to $1000$ iterations, which always gave a value of $w$ satisfying $\norm{\nabla f(w)}_\infty \leq 1.4 \times 10^{-7}$, indicating that this is a very accurate approximation of the true solution. In the test error plots, we have excluded the \emph{SAG} and \emph{SAG-NUS} methods to keep the plots interpretable (while Pegasos does not appear becuase it performs very poorly), but  Appendx C includes these plots with all methods added. In the test error plots, we have also plotted as dotted lines the performance of the classic stochastic gradient methods when the second-best step-size is used.

We make several observations based on these experiments:
\begin{itemize}\itemsep 0pt
\item \emph{SG} outperformed \emph{Pegasos}. Pegasos is known to move exponentially away from the solution in the early iterations~\citep{bach2011non}, meaning that $\norm{w^t-w^*} \geq \rho^t\norm{w^0-w^*}$ for some $\rho > 1$, while SG moves exponentially towards the solution ($\rho < 1$) in the early iterations~\citep{nedic2001convergence}.
\item \emph{ASG} outperformed \emph{AdaGrad} and \emph{SMD} (in addition to \emph{SG}). ASG methods are known to achieve the same asymptotic efficiency as an optimal stochastic Newton method~\citep{polyak1992acceleration}, while AdaGrad and SMD can be viewed as approximations to a stochastic Newton method. ~\citet{vishwanathan2006accelerated} did not compare to ASG, because applying ASG to large/sparse data requires the recursion of~\citet{xu2010fastASG}.
\item \emph{Hybrid} outperformed \emph{L-BFGS}. The hybrid algorithm processes fewer data points in the early iterations, leading to cheaper iterations.
\item None of the three algorithms \emph{ASG}/\emph{Hybrid}/\emph{SAG} dominated the others: the relative ranks of these methods changed based on the data set and whether we could choose the optimal step-size.
\item \emph{OEG} performed very well on the first two datasets, but was less effective on the second two. By experimenting with various initializations, we found that we could obtain much better performance with OEG on these two datasets. We report these results in the Appendix D, although Appendix E shows that OEG was less competitive in terms of runtime.
\item Both \emph{SAG-NUS} methods outperform all other methods (except OEG) by a substantial margin based on the training objective, and are always among the best methods in terms of the test error. Further, our proposed \emph{SAG-NUS*} always outperforms \emph{SAG-NUS}.
\end{itemize}

On three of the four data sets, the best classic stochastic gradient methods (\emph{AdaGrad} and \emph{ASG}) seem to reach the optimal test error with a similar speed to the \emph{SAG-NUS*} method, although they require many passes to reach the optimal test error on the OCR data. Further, we see that the good test error performance of the \emph{AdaGrad} and \emph{ASG} methods \emph{is very sensitive to choosing the optimal step-size}, as the methods perform much worse if we don't use the optimal step-size (dashed lines in Figure~\ref{fig:testPass}). 
In contrast, SAG uses an adaptive step-size and has virtually identical performance even if  the initial value of $L_g$ is too small by several orders of magnitude (the line-search quickly increases $L_g$ to a reasonable value on the first training example, so the dashed black line in Figure 2 would be on top of the solid line).

To quantify the memory savings given by the choices in Section~\ref{sec:SAG}, below we report the size of the memory required for these datasets under different memory-saving strategies divided by the memory required by the naive SAG algorithm.
\emph{Sparse} refers to only storing non-zero gradient values, \emph{Marginals} refers to storing all unary and pairwise marginals, and \emph{Mixed} refers to storing node marginals and the gradient with respect to pairwise features (recall that the pairwise features do not depend on $x$ in our models).
\begin{center}
\begin{tabular}{llll}
Dataset & Sparse & Marginals & Mixed \\
OCR & $7.8 \times 10^{-1}$ & $1.1 \times 10^0$ & $2.1 \times 10^{-1}$\\
CoNLL-2000 & $4.8 \times 10^{-3}$ & $7.0 \times 10^{-3}$ & $6.1 \times 10^{-4}$\\
CoNLL-2002 & $6.4 \times 10^{-4}$ & $3.8 \times 10^{-4}$ & $7.0 \times 10^{-5}$\\
POS-WJ & $1.3 \times 10^{-3}$ & $5.5 \times 10^{-3}$ & $3.6 \times 10^{-4}$\\
\end{tabular}
\end{center}

\section{Discussion}


Due to its memory requirements, it may be difficult to apply the SAG algorithm for natural language applications involving complex features that depend on a large number of labels.
However, grouping training examples into mini-batches can also reduce the memory requirement (since only the gradients with respect to the mini-batches would be needed).  An alternative strategy for reducing the memory is to use the algorithm of~\citet{johnson2013accelerating} or~\citet{zhang2013linear}. These require evaluating the chosen training example twice on each iteration, and occasionally require full passes through the data, but do not have the memory requirements of SAG (in our experiments, these performed similar to or slightly worse than running SAG at half speed).

We believe linearly-convergent stochastic gradient algorithms with non-uniform sampling could give a substantial performance improvement in a large variety of CRF training problems, and we emphasize that the method likely has extensions beyond what we have examined. For example, we have focused on the case of $\ell_2$-regularization but for large-scale problems there is substantial interest in using $\ell_1$-regularization CRFs~\citep{tsuruoka-tsujii-ananiadou:2009:ACLIJCNLP,lavergne2010practical,zhou-qiu-huang:2011:IJCNLP-2011}. Fortunately, such non-smooth regularizers can be handled with a proximal-gradient variant of the method, see~\citet{defazio2014saga}.
While we have considered chain-structured data the algorithm applies to general graph structures, and any method for computing/approximating the marginals could be adopted. 
Finally, the SAG algorithm could be modified to use multi-threaded computation as in the algorithm of~\citet{lavergne2010practical}, and indeed might be well-suited to massively distributed parallel implementations.


\section*{Acknowledgments}

We would like to thank the anonymous reviewers as well as Simon Lacoste-Julien for their helpful comments. This research was supported by the Natural Sciences and Engineering Research Council of Canada (RGPIN 262313, RGPAS 446348, and CRDPJ 412844 which was made possible by a generous contribution from The Boeing Company and AeroInfo Systems). Travel support to attend the conference was provided by the Institute for Computing, Information and Cognitive Systems (ICICS).

\appendix

\section*{Appendx A: Proof of Part (a) of Proposition 1}

In this section we consider the minimization problem
\[
\min_x f(x) = \frac{1}{n}\sum_{i=1}^nf_i(x),
\]
where each $f_i'$ is $L$-Lipschitz continuous and each $f_i$ is $\mu$-strongly-convex. We will define Algorithm~2, a variant of SAGA, by the sequences $\{x^k\}$, $\{\nu_k\}$, and $\{\phi_j^k\}$ given by
\begin{align*}
    \nu_k & =\frac{1}{np_j}[f'_{j_k}(x^k)-f'_{j_k}(\phi_j^k)]+\frac{1}{n}\sum_{i=1}^{n}f'_i(\phi_i^k),\\
    x^{k+1} &=x^k-\frac{1}{\eta}\nu_k,\\
    \phi_{j}^{k+1}& =\begin{cases}f_{j_k}'(x^k) & \text{if  $j = j_k$,}\\
\phi_j^k & \text{otherwise,}\end{cases}
\end{align*}
where $j_k = j$ with probability $p_j$. In this section we'll use the convention that  $x=x^k$, that $\phi_j=\phi_j^k$, and that $x^*$ is the minimizer of $f$. We first show that $\nu_k$ is an unbiased gradient estimator and derive a bound on its variance.
\begin{lemma}
\label{lemma:variance}
We have  $\mathbb E[\nu_k]=f'(x^k)$ and subsequently
 \[
\mathbb E\|\nu_k\|^2 \leq 2\mathbb E\|\frac{1}{np_j}[f'_j(x)-f'_j(x^*)]\|^2+2\mathbb E\|\frac{1}{np_j}[f'_j(\phi_j)-f'_j(x^*)]\|^2.
\]
\end{lemma}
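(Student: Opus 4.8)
The plan is to establish the two assertions in sequence: unbiasedness follows from a one-line expectation computation over the random index, and the variance bound then follows from the elementary inequality $\|u+v\|^2 \le 2\|u\|^2 + 2\|v\|^2$ once the deterministic correction term in $\nu_k$ is recognized as a conditional mean. Throughout, all expectations are conditional on the current state $x = x^k$ and $\{\phi_i = \phi_i^k\}_{i=1}^n$, and the only randomness is the draw of $j_k$, which equals $j$ with probability $p_j$.

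For the first claim I would compute
\[
\mathbb{E}\!\left[\tfrac{1}{np_{j_k}}\bigl(f'_{j_k}(x) - f'_{j_k}(\phi_{j_k})\bigr)\right] = \sum_{j=1}^n p_j\cdot\tfrac{1}{np_j}\bigl(f'_j(x) - f'_j(\phi_j)\bigr) = \tfrac1n\sum_{j=1}^n f'_j(x) - \tfrac1n\sum_{j=1}^n f'_j(\phi_j),
\]
so the second sum exactly cancels the deterministic term $\tfrac1n\sum_i f'_i(\phi_i)$ appearing in $\nu_k$, leaving $\mathbb{E}[\nu_k] = \tfrac1n\sum_j f'_j(x) = f'(x)$.

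For the second claim, write $a := \tfrac{1}{np_j}\bigl(f'_j(x) - f'_j(x^*)\bigr)$ and $b := \tfrac{1}{np_j}\bigl(f'_j(\phi_j) - f'_j(x^*)\bigr)$, both random through $j$, and let $c := \tfrac1n\sum_i f'_i(\phi_i)$ denote the deterministic term. Since $x^*$ minimizes $f$ we have $f'(x^*) = 0$, so the same averaging identity gives $\mathbb{E}[b] = \tfrac1n\sum_j f'_j(\phi_j) = c$. Adding and subtracting $f'_j(x^*)$ inside the difference lets me rewrite $\nu_k = a - b + c = a - (b - \mathbb{E}[b])$, that is, $\nu_k$ is $a$ minus a centered random vector. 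Applying $\|u+v\|^2 \le 2\|u\|^2 + 2\|v\|^2$, taking expectations, and then bounding the variance by the second moment via $\mathbb{E}\|b - \mathbb{E}[b]\|^2 = \mathbb{E}\|b\|^2 - \|\mathbb{E}[b]\|^2 \le \mathbb{E}\|b\|^2$, yields
\[
\mathbb{E}\|\nu_k\|^2 \le 2\,\mathbb{E}\|a\|^2 + 2\,\mathbb{E}\|b - \mathbb{E}[b]\|^2 \le 2\,\mathbb{E}\|a\|^2 + 2\,\mathbb{E}\|b\|^2,
\]
which is exactly the stated inequality.

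This lemma is essentially bookkeeping, so there is no genuine obstacle; the single point that must not be overlooked is that the deterministic average term $c$ is precisely $\mathbb{E}[b]$, which is what permits the clean decomposition of $\nu_k$ as $a$ minus a mean-zero term — otherwise a cross term of the form $\langle a, b - c\rangle$ would survive and the simple two-term split would fail. I would also note that one deliberately keeps this looser bound rather than the exact second moment of $\nu_k$ because it is presumably the form consumed by the Lyapunov/potential-function argument in the remainder of Appendix~A.
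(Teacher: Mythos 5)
Your proof is correct and follows essentially the same route as the paper: both hinge on recognizing that the deterministic term $\tfrac1n\sum_i f'_i(\phi_i)$ equals $\mathbb{E}\bigl[\tfrac{1}{np_j}(f'_j(\phi_j)-f'_j(x^*))\bigr]$ (using $f'(x^*)=0$), then applying $\|u+v\|^2\le 2\|u\|^2+2\|v\|^2$ together with $\mathbb{E}\|X-\mathbb{E}[X]\|^2\le\mathbb{E}\|X\|^2$. The only difference is cosmetic: the paper additionally centers the $\tfrac{1}{np_j}(f'_j(x)-f'_j(x^*))$ term around $f'(x)$ and discards a leftover $-\|f'(x)\|^2$ at the end, whereas you apply the two-term inequality directly, which is slightly cleaner and yields the identical bound.
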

\begin{proof}
We have
\begin{align*}
\mathbb E[\nu_k] & = \sum_{j=1}^n p_{j=1} \left[\frac{1}{np_j}[f'_{j}(x)-f'_{j_k}(\phi_j)]+\frac{1}{n}\sum_{i=1}^{n}f'_i(\phi_i)\right]\\
&  = \sum_{j=1}^n \left[\frac{1}{n}f_{j}'(x) - \frac{1}{n}f_j'(\phi_j) + \frac{p_j}{n}\sum_{i=1}^n f_i'(\phi_i)\right]\\
&  = \frac{1}{n}\sum_{i=1}^n f_{j}'(x) - \frac{1}{n}\sum_{i=1}^n f_{j}'(\phi_j) + \sum_{i=1}^n[p_i]\frac{1}{n}\sum_{i=1}^n f_{j}'(\phi_j)\\
& = \frac{1}{n}\sum_{i=1}^n f_i'(x) = f'(x).
\end{align*}
To show the second part, we use that $\mathbb E\|X - \mathbb E [X] + Y\|^2 = \mathbb E\|X - \mathbb E[X]\|^2 + \mathbb{E}\|Y\|^2$ if $X$ and $Y$ are independent, $\mathbb E\|X-\mathbb E[X]\|^2 \leq \mathbb E\|X\|^2$, and $\|x+y\|^2 \leq 2\|x\|^2+2\|y\|^2 $,
\begin{align*} 
\mathbb E\|\nu_k\|^2
& =\mathbb E\|\frac{1}{np_j}[f'_j(x)-f'_j(\phi_j)]+\frac{1}{n}\sum_{i=1}^{n}f'_i(\phi_i)\|^2\\
& =\mathbb E\|\frac{1}{np_j}[f'_j(x)-f'_j(x^*)]-f'(x)+f'(x)-\frac{1}{np_j}[f'_j(\phi_j)-f'_j(x^*)]-\frac{1}{n}\sum_{i=1}^{n}f'_i(\phi_i))\|^2\\
& =\mathbb E\|\frac{1}{np_j}[f'_j(x)-f'_j(x^*)]-f'(x)-\frac{1}{np_j}[f'_j(\phi_j)-f'_j(x^*)]-\frac{1}{n}\sum_{i=1}^{n}f'_i(\phi_i))\|^2+\|f'(x)\|^2\\
& \leq \mathbb E\|\frac{1}{np_j}[f'_j(x)-f'_j(x^*)]-f'(x)\|^2  + 2\mathbb E\|\frac{1}{np_j}[f'_j(\phi_j)-f'_j(x^*)]-\frac{1}{n}\sum_{i=1}^{n}f'_i(\phi_i))\|^2+\|f'(x)\|^2\\
& \leq 2\mathbb E\|\frac{1}{np_j}[f'_j(x)-f'_j(x^*)]\|^2-2\|f'(x)\|^2+ 2\mathbb E\|\frac{1}{np_j}[f'_j(\phi_j)-f'_j(x^*)]\|^2+\|f'(x)\|^2\\
& \leq 2\mathbb E\|\frac{1}{np_j}[f'_j(x)-f'_j(x^*)]\|^2+2\mathbb E\|\frac{1}{np_j}[f'_j(\phi_j)-f'_j(x^*)]\|^2.
\end{align*}
\end{proof}
\noindent We will also make use of the inequality
\begin{equation}
\label{lemma2}
\langle f'(x),x^*-x\rangle \leq -\frac \mu 2\|x-x^*\|^2 - \frac 1 {2Ln}\sum_{i=1}^n \|f'_i(x^*)-f'_i(x)\|^2,
\end{equation}
which follows from~\citet[Lemma 1]{defazio2014saga} using that $f'(x^*)=0$ and the non-positivity of $\frac{L-\mu}{L}[f(x^*)-f(x)]$. We now give the proof of part (a) of Proposition 1, which we state below.
\setcounter{proposition}{0}
\begin{proposition}[a]
If $\eta = \frac{4L+n\mu}{np_m}$ and $p_m = \min_j\{p_j\}$, then Algorithm~2 has
\[
\mathbb{E}[\norm{x^k-x^*}^2] \leq \left(  1 - \frac{np_m\mu}{n\mu + 4L}  \right)^t \left[\norm{x^0 - x^*} +  \frac{2p_m}{(4L+n\mu)^2}\sum_i\frac{1}{p_i}\norm{\nabla f_i(x^0) - \nabla f_i(x^*)}^2\right].
\]
\end{proposition}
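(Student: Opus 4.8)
The plan is to run the standard Lyapunov‑function argument for SAGA, adapted to the non‑uniform weights. Define the potential
\[
T^k = \norm{x^k - x^*}^2 + c\sum_{i=1}^n \frac{1}{p_i}\norm{f_i'(\phi_i^k) - f_i'(x^*)}^2
\]
for a constant $c>0$ fixed later, and aim to show $\mathbb{E}[T^{k+1}\mid x^k] \leq (1-\mu/\eta)\,T^k$. Iterating this, using the initialization $\phi_i^0 = x^0$ so that $T^0 = \norm{x^0-x^*}^2 + c\sum_i \tfrac1{p_i}\norm{\nabla f_i(x^0)-\nabla f_i(x^*)}^2$, and using $\norm{x^k-x^*}^2 \le T^k$, then gives the stated bound with $C_a$ arising as the admissible $c$ times the initial memory term.

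The one‑step inequality combines three ingredients, in this order. First, expand $\norm{x^{k+1}-x^*}^2 = \norm{x^k - x^* - \tfrac1\eta\nu_k}^2$ and take the conditional expectation; by Lemma~\ref{lemma:variance}, $\mathbb{E}[\nu_k]=f'(x^k)$, so this produces $-\tfrac2\eta\langle f'(x^k),x^k-x^*\rangle$ plus $\tfrac{1}{\eta^2}$ times the variance bound, which is a sum of a $1/p_i$‑weighted term in $\norm{f_i'(x^k)-f_i'(x^*)}^2$ and a $1/p_i$‑weighted term in $\norm{f_i'(\phi_i^k)-f_i'(x^*)}^2$. Second, bound $\langle f'(x^k),x^*-x^k\rangle$ via inequality~\eqref{lemma2}: this yields the contraction term $-\tfrac\mu\eta\norm{x^k-x^*}^2$ together with a negative \emph{unweighted} $-\tfrac{1}{Ln\eta}\sum_i\norm{f_i'(x^k)-f_i'(x^*)}^2$. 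Third, compute the exact one‑step change of the weighted memory sum: since memory $i$ is refreshed to $f_i'(x^k)$ with probability $p_i$,
\[
\mathbb{E}\Big[\sum_i\tfrac1{p_i}\norm{f_i'(\phi_i^{k+1})-f_i'(x^*)}^2 \,\Big|\, x^k\Big] = \sum_i\norm{f_i'(x^k)-f_i'(x^*)}^2 + \sum_i\tfrac{1-p_i}{p_i}\norm{f_i'(\phi_i^k)-f_i'(x^*)}^2.
\]

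Adding these, the coefficient of $\norm{x^k-x^*}^2$ is exactly $1-\mu/\eta$, so it remains to pick $c$ and $\eta$ so that (a) each $\norm{f_i'(x^k)-f_i'(x^*)}^2$ coefficient is $\le 0$ and (b) each memory‑term coefficient is $\le (1-\mu/\eta)\,c/p_i$. Routing the $1/p_i$ factors of the variance bound through $1/p_{\min}$, requirement (a) becomes an \emph{upper} bound on $c$ and (b) becomes a \emph{lower} bound of the form $c(p_{\min}-\mu/\eta)\ge \tfrac{2}{\eta^2n^2}$, whose bracket is positive because the choice $\eta=(4L+n\mu)/(np_{\min})$ gives $\mu/\eta=\mu n p_{\min}/(4L+n\mu)\le p_{\min}$. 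With this $\eta$ the two bounds on $c$ are compatible --- the check reduces to an inequality of the type $\tfrac12\le\tfrac{2L+n\mu}{4L+n\mu}$, i.e.\ $n\mu\ge0$ --- so an admissible $c$ exists, $\mathbb{E}[T^{k+1}\mid x^k]\le(1-\mu/\eta)T^k$ holds, and unrolling the recursion finishes the proof.

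The main obstacle is exactly this constant‑matching under non‑uniform sampling: the variance term of Lemma~\ref{lemma:variance} carries $1/p_i$ weights while the strong‑convexity contraction of~\eqref{lemma2} is unweighted, so everything must be pushed through the worst probability $p_{\min}$, and it is precisely the step‑size $\alpha=1/\eta=np_{\min}/(4L+n\mu)$ that makes the admissible interval for $c$ nonempty; obtaining the sharp constant $C_a$ rather than a larger one requires not over‑relaxing in that $p_{\min}$ routing, and possibly retaining the $-\norm{f'(x^k)}^2$ term that the stated form of Lemma~\ref{lemma:variance} discards. The remaining steps --- the quadratic expansion, the sampling recursion, and the geometric unrolling --- are routine.
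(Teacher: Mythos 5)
Your proposal is correct and follows essentially the same route as the paper's proof: the same $1/p_i$-weighted Lyapunov function (differing only in whether the free constant multiplies the iterate term or the memory term), the same three ingredients (quadratic expansion with the variance bound of Lemma~\ref{lemma:variance}, the strong-convexity inequality~\eqref{lemma2}, and the exact expected one-step recursion for the memory sum), and the same routing of the $1/p_i$ factors through $p_{\min}$ with contraction rate $1-\mu/\eta$. Your compatibility check for the admissible interval of $c$, which reduces to $\tfrac12 \le \tfrac{2L+n\mu}{4L+n\mu}$, is an algebraically equivalent repackaging of the paper's verification that the gradient-difference coefficient is non-positive after setting the memory coefficient to zero.
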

\begin{proof}
 We denote the Lyapunov function $T^k$ at iteration $k$ by
\[
T^k=\frac{1}{n}\sum_{i=1}^n\frac{1}{np_j}\|f'_i(\phi_i^k)-f'_i(x^*)\|^2+c\|x^k-x^*\|^2.
\]
 We will will show that $\mathbb E[T^{k+1}]\leq (1-\frac{1}{\kappa})T^k$ for some $\kappa < 1$. First, we write the expectation of the first term as
\begin{align}
\label{eq:lyapunov1}
& \mathbb E\left[\sum_i \frac 1 {n^2p_i} \|f'_i(\phi_i)-f'_i(x^*)\|^2\right]\nonumber\\
& = \mathbb E\left[\frac 1 {n^2p_j} \|f'_j(x)-f'_j(x^*)\|^2\right]+ \sum_i \frac 1 {n^2p_i} \|f'_i(\phi_i)-f'_i(x^*)\|^2-E\left[\frac 1 {n^2p_j} \|f'_j(\phi_j)-f'_j(x^*)\|^2\right]\nonumber\\
& =\frac 1 {n^2} \sum_i \| f'_i(x)-f'_i(x^*)\|^2+\frac 1 {n^2} \sum_i \left(\frac 1 p_i - 1\right) \| f'_i(\phi_i)-f'_i(x^*)\|^2.
\end{align}
Next, we simplify the other term of $\mathbb E[T^{k+1}]$,
\begin{align*}
c\mathbb E\|x^{k+1}-x^*\|^2 &= c\mathbb E\|x-x^*-\frac 1 \eta \nu_k\|^2\\
& =c\|x-x^*\|^2+\frac c {\eta^2}\mathbb E\|\nu_k\|^2+\frac{2c} \eta \langle f'(x), x-x^*\rangle
\end{align*}
We now use Lemma~\ref{lemma:variance} followed by Inequality~\eqref{lemma2},
\begin{align*}
c\mathbb E\|x^{k+1}-x^*\|^2 & \leq c\|x-x^*\|^2+\frac c {\eta^2} 2\mathbb E\|\frac{1}{np_j}[f'_j(x)-f'_j(x^*)]\|^2+\frac c {\eta^2}2\mathbb E\|\frac{1}{np_j}[f'_j(\phi_j)-f'_j(x^*)]\|^2+\frac{2c} \eta \langle f'(x), x-x^*\rangle\nonumber\\
& \leq c(1-\frac \mu \eta)\|x-x^*\|^2+ \frac {2c} {\eta^2} \mathbb E\|\frac 1 {np_j}(f'_j(x)-f'_j(x^*))\|^2\nonumber\\
& +\frac {2c} {\eta^2} \mathbb E\|\frac 1 {np_j}(f'_j(\phi_j)-f'_j(x^*))\|^2 -\frac c {n\eta L}\sum_i \|f'_i(x^*)- f'_i(x)\|^2\nonumber\\
& = c(1-\frac \mu \eta)\|x-x^*\|^2 + \sum_i ( \frac {2c} {n^2\eta^2p_i}-\frac c {n\eta L})\|f'_i(x)- f'_i(x^*)\|^2+  \sum_i ( \frac {2c} {n^2\eta^2p_i})\|f'_i(\phi_i)- f'_i(x^*)\|^2.
\end{align*}
We use this to bound the expected improvement in the Lyapunov function,
\begin{align*}
\mathbb E[T^{k+1}]-T^k
& = E[T^{k+1}] - \frac{1}{n}\sum_{i=1}^n\frac{1}{np_j}\|f'_i(\phi_i)-f'_i(x^*)\|^2-c\|x-x^*\|^2\\
& \leq \frac 1 {n^2} \sum_i \| f'_i(x)-f'_i(x^*)\|^2+\frac 1 {n^2} \sum_i \left(\frac 1 p_i - 1\right) \| f'_i(\phi_i)-f'_i(x^*)\|^2 & \text{From \eqref{eq:lyapunov1}} \\
& + c(1-\frac \mu \eta)\|x-x^*\|^2 + \sum_i ( \frac {2c} {n^2\eta^2p_i}-\frac c {n\eta L})\|f'_i(x)- f'_i(x^*)\|^2 +  \sum_i ( \frac {2c} {n^2\eta^2p_i})\|f'_i(\phi_i)- f'_i(x^*)\|^2 & \text{From above}\\
&  - \frac{1}{n}\sum_{i=1}^n\frac{1}{np_j}\|f'_i(\phi_i)-f'_i(x^*)\|^2-c\|x-x^*\|^2 & \text{Def'n of $T^k$}\\
& = \frac 1 {n^2} \sum_i \| f'_i(x)-f'_i(x^*)\|^2-\frac 1 {n^2} \sum_i \| f'_i(\phi_i)-f'_i(x^*)\|^2 \\
& -\frac{c \mu}{ \eta}\|x-x^*\|^2 + \sum_i ( \frac {2c} {n^2\eta^2p_i}-\frac c {n\eta L})\|f'_i(x)- f'_i(x^*)\|^2 +  \sum_i ( \frac {2c} {n^2\eta^2p_i})\|f'_i(\phi_i)- f'_i(x^*)\|^2\\
& = -\frac 1 \kappa T^k + \left( \frac 1 \kappa - \frac \mu \eta \right)c\|x-x*\|^2 & (*) \\
& +\sum_i \left( \frac {2c} {n^2\eta^2p_i}+\frac 1 {n^2}-\frac c {n\eta L}\right)\|f'_i(x)- f'_i(x^*)\|^2\\
& + \sum_i \left( \frac {2c} {n^2\eta^2p_i}-\frac 1 {n^2}+\frac 1 {n^2\kappa p_i}\right)\|f'_i(\phi_i)- f'_i(x^*)\|^2\\
& \leq  -\frac 1 \kappa T^k + \left( \frac 1 \kappa - \frac \mu \eta \right)\left[c\|x-x*\|^2\right]\\
& +\left( \frac {2c} {n^2\eta^2p_m}+\frac 1 {n^2}-\frac c {n\eta L}\right)\left[\sum_i \|f'_i(x)- f'_i(x^*)\|^2\right]\\
& + \left( \frac {2c} {n^2\eta^2p_m}-\frac 1 {n^2}+\frac 1 {n^2\kappa p_m}\right)\left[\sum_i \|f'_i(\phi_i)- f'_i(x^*)\|^2\right],\\
\end{align*}
where in $(*)$ we add and subtract $\frac{1}{\kappa}T^k$ and in the last line we assumed $c \geq 0$ and used $p_i \geq p_m$. The terms in square brackets are positive, and if we can choose the constants $\{c,\kappa,\eta\}$ to make the round brackets non-positive, we have
\[
\mathbb E[T^{k+1}] \leq \left(1-\frac 1 \kappa\right) T^k.
\]
For the first expression, choosing $ \kappa = \frac  \eta \mu$ makes it zero. We can make the third expression zero under this choice of $\kappa$ by choosing $ c = \frac {\eta^2p_m} 2 - \frac {\mu \eta} 2$. This follows because  
\[
 \frac {2c} {n^2\eta^2p_m}-\frac 1 {n^2}+\frac 1 {n^2\kappa p_m}= \frac {2c} {n^2\eta^2p_m}-\frac 1 {n^2}+\frac \mu {n^2\eta p_m} = 0,
\]
is equivalent to
\[
 \frac {2c} {n^2\eta^2p_m}= \frac 1 {n^2}-\frac \mu {n^2\eta p_m} \Leftrightarrow
  c = \frac {\eta^2p_m} 2 - \frac {\mu \eta} 2.
\]
For the second expression, note that with our choice of $c$ we have
\[
 \frac {2c} {n^2\eta^2p_m}+\frac 1 {n^2}-\frac c {n\eta L} = \frac 1 {n^2}-\frac \mu {n^2\eta p_m}+\frac 1 {n^2}-\frac  {\frac {\eta^2p_m} 2 - \frac {\mu \eta} 2}  {n\eta L}, 
\]
which (multiplying by $n$) is negative if we have
\[
  \frac 2 n + \frac \mu {2L} \leq \frac \mu {n\eta p_m} + \frac {\eta p_m} {2L}.
\]
 Ignoring the last term, we can choose 
\[
\eta = \frac {4L+n\mu} {np_m}.
\]
We will also require that $c \geq 0$ to complete the proof, but this follows because $\eta \geq \frac{\mu}{p_m}$. By using that
\[
c\mathbb{E}[\norm{x^{k+1}-x^*}^2] \leq \mathbb{E}[T^{k+1}] \leq \left(1-\frac{1}{\kappa}\right)T^k = \left(1-\frac{\mu}{\eta}\right)T^k
\]
and chaining the expectations while using the definition of $\eta$ we obtain
\begin{align*}
\mathbb{E}[\norm{x^k - x^*}^2] & \leq \left(1 - \frac{\mu}{\eta}\right)^k\frac{T^0}{c}\\
& = \left(1 - \frac{np_m\mu}{n\mu + 4L}\right)^k\left[\|x^0-x^*\|^2 + \frac{1}{cn}\sum_{i=1}^n\frac{1}{np_j}\|f'_i(\phi_i^0)-f'_i(x^*)\|^2\right].
\end{align*}
To get the final expression, use that
\begin{align*}
\frac{1}{cn^2} = \frac{2}{n^2(\eta^2p_m - \mu \eta)} \leq \frac{2}{n^2\eta^2p_m} = \frac{2n^2p_m^2}{n^2p_m(4L+n\mu)^2} = \frac{2p_m}{(4L+n\mu)^2}.
\end{align*}
\end{proof}

\section*{Appendix B: Proof of Part (b) of Proposition 1}

In this section we consider the minimization problem
\[
\min_x f(x) = \frac{1}{n}\sum_{i=1}^nf_i(x),
\]
where each $f_i'$ is $L_i$-Lipschitz continuous and $f$ is $\mu$-strongly-convex. We will define Algorithm~3 by the sequences $\{x^k\}$, $\{\nu_k\}$, and $\{\phi_j^k\}$ given by
\begin{align*}
    \nu_k & =\frac{\bar{L}}{L_i}[f'_{j_k}(x^k)-f'_{j_k}(\phi_j^k)]+\frac{1}{n}\sum_{i=1}^{n}f'_i(\phi_i^k),\\
    x^{k+1} &=x^k-\gamma\nu_k,\\
    \phi_{j}^{k+1}& =\begin{cases}f_{r_k}'(x^k) & \text{if  $j = r_k$,}\\
\phi_j^k & \text{otherwise,}\end{cases}
\end{align*}
where $j_k = j$ with probability $\frac{L_i}{\sum_{j=1}^nL_j}$ and $r_k$ is picked uniformly at random. This is identical to Algorithm~2, except it uses a specific choice of the $p_j$ and the memory $\phi_j$ is updated based on a different random sample that is sampled uniformly. This algorithm maintains the key property that the expected step is a gradient step, $\mathbb E[\nu_k]=f'(x^k)$.

\noindent From our assumptions about $f$ and the $f_i$, we have~\citep[][see Chapter 2]{nesterov2004introductory}.
\begin{equation}
\label{eq:lipFi}
f_i(x)\geq f_i(y)+\left\langle f_i^{\prime}(y),x-y\right\rangle +\frac{1}{2L}\left\Vert f_i^{\prime}(x)-f_i^{\prime}(y)\right\Vert ^{2},
\end{equation}
and
\begin{equation}
\label{eq:sc}
f(x)\geq f(y)+\left\langle f^{\prime}(y),x-y\right\rangle +\frac{\mu}{2}\left\Vert x-y\right\Vert ^{2}.
\end{equation}
We use these to derive several useful inequalities that we will use in the analysis.
Adding the former times $\frac{1}{2n}$ for all $i$ to the latter times $\frac{1}{2}$ for $y=x^*$ gives the inequality
\begin{equation}
\label{cor:saga-ip-bound} 
\left\langle f^{\prime}(x),x^{*}-x\right\rangle  \leq f(x^{*})-f(x)-\frac{\mu}{4}\left\Vert x^{*}-x\right\Vert ^{2}-\frac{1}{4n}\sum_{i}\frac{1}{L_{i}}\left\Vert f_{i}^{\prime}(x^{*})-f_{i}^{\prime}(x)\right\Vert ^{2}.
\end{equation}
Also by applying~\eqref{eq:lipFi}  with $y=x^*$ and $x=\phi_i$, for each $f_{i}$ and summing,
we have that for all $\phi_{i}$ and $x^{*}$:
\begin{equation}
\label{cor:grad-diff-phii}
\frac{1}{n}\sum_{i}\frac{1}{L_{i}}\left\Vert f_{i}^{\prime}(\phi_{i})-f_{i}^{\prime}(x^{*})\right\Vert ^{2}\leq\frac{2}{n}\sum_{i}\left[f_{i}(\phi_{i})-f(x^{*})-\left\langle f_{i}^{\prime}(x^{*}),\phi_{i}-x^{*}\right\rangle \right].
\end{equation}
Further, by both minimizing sides of~\eqref{eq:sc} we obtain
\begin{equation}
\label{eq:scGrad}
-\left\Vert f^{\prime}(x)\right\Vert ^{2}\leq-2\mu\left[f(x)-f(x^{*})\right].
\end{equation}
We next derive a bound on the variance of the gradient estimate.
\begin{lemma}
\label{lem:saga-error-bound}It holds that for any $\phi_{i}$ that
with $x^{k+1}$ and $x^{k}$ as given by Algorithm~2 we have
\begin{eqnarray*}
\mathbb{E}\left\Vert x^{k+1}\!-\! x^{k}\right\Vert ^{2} & \!\leq\! & 2\gamma^{2}\frac{\bar{L}}{n}\sum_{i}\frac{1}{L_{i}}\left\Vert f_{j}^{\prime}(\phi_{j}^{k})-f_{j}^{\prime}(x^{*})\right\Vert ^{2}\\
 & \!\! & +2\gamma^{2}\frac{\bar{L}}{n}\sum_{i}\frac{1}{L_{i}}\left\Vert f_{j}^{\prime}(x^{k})-f_{j}^{\prime}(x^{*})\right\Vert ^{2}-\stepsize^{2}\left\Vert f^{\prime}(x^{k})\right\Vert ^{2}.
\end{eqnarray*}
\end{lemma}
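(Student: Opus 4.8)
The plan is to recognise that this lemma is essentially the variance bound of Lemma~\ref{lemma:variance}, \emph{without} discarding the term $-\|f'(x^k)\|^2$ that is thrown away there, specialised to the sampling probabilities $p_j = L_j/\sum_i L_i$. Since $x^{k+1}-x^k = -\gamma\nu_k$ we have $\|x^{k+1}-x^k\|^2 = \gamma^2\|\nu_k\|^2$, so the whole task is to bound $\mathbb{E}\|\nu_k\|^2$. Before starting I would note that although the memory is refreshed at the uniform index $r_k$ rather than at $j_k$, the vector $\nu_k$ is a function of $x^k$, of the \emph{current} memory $\phi^k$, and of $j_k$ only --- it does not see $\phi^{k+1}$ --- so, conditioning on the history up to step $k$, the only randomness in $\nu_k$ is the single nonuniform draw $j_k$, and the computation proceeds exactly as in Lemma~\ref{lemma:variance}; the only difference is the value of $p_j$, which enters through $\frac{1}{np_j} = \bar{L}/L_j$.

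For the estimate itself I would write, using $f'(x^*)=0$ and the identity $\mathbb{E}[\nu_k]=f'(x^k)$ recorded just above the lemma,
\[
\nu_k = \Big(\tfrac{1}{np_{j_k}}[f'_{j_k}(x^k)-f'_{j_k}(x^*)] - f'(x^k)\Big) - \Big(\tfrac{1}{np_{j_k}}[f'_{j_k}(\phi^k_{j_k})-f'_{j_k}(x^*)] - \tfrac1n\sum_i f'_i(\phi^k_i)\Big) + f'(x^k),
\]
in which both bracketed terms have conditional mean zero. Hence $\mathbb{E}\|\nu_k\|^2$ equals the expected squared norm of the mean-zero part plus $\|f'(x^k)\|^2$; I would bound the mean-zero part by $\|u-v\|^2 \le 2\|u\|^2 + 2\|v\|^2$, then apply $\mathbb{E}\|X-\mathbb{E}X\|^2 = \mathbb{E}\|X\|^2 - \|\mathbb{E}X\|^2$ to the first piece --- where $\mathbb{E}X = f'(x^k)$, producing a $-2\|f'(x^k)\|^2$ --- and $\mathbb{E}\|X-\mathbb{E}X\|^2 \le \mathbb{E}\|X\|^2$ to the second, dropping its negative term. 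Collecting terms gives
\[
\mathbb{E}\|\nu_k\|^2 \le 2\mathbb{E}\big\|\tfrac{1}{np_{j_k}}[f'_{j_k}(x^k)-f'_{j_k}(x^*)]\big\|^2 + 2\mathbb{E}\big\|\tfrac{1}{np_{j_k}}[f'_{j_k}(\phi^k_{j_k})-f'_{j_k}(x^*)]\big\|^2 - \|f'(x^k)\|^2.
\]
Finally I would evaluate the two expectations with $p_j = L_j/(n\bar{L})$: for instance $\mathbb{E}\big[\tfrac{1}{(np_{j_k})^2}\|f'_{j_k}(x^k)-f'_{j_k}(x^*)\|^2\big] = \sum_i \tfrac{L_i}{n\bar{L}}\cdot\tfrac{\bar{L}^2}{L_i^2}\|f'_i(x^k)-f'_i(x^*)\|^2 = \tfrac{\bar{L}}{n}\sum_i \tfrac1{L_i}\|f'_i(x^k)-f'_i(x^*)\|^2$, and similarly for the $\phi^k$ term; multiplying the resulting inequality by $\gamma^2$ yields exactly the claimed bound (with the summation index read as $i$, as the statement intends).

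I do not anticipate a real obstacle; the only care required is bookkeeping. One must keep the $-\|f'(x^k)\|^2$ term rather than discard it as in Lemma~\ref{lemma:variance} --- it is precisely what later feeds the Lyapunov analysis, combined with~\eqref{eq:scGrad}, to produce a useful negative contribution --- and one must track which of the several copies of $\|f'(x^k)\|^2$ survives the factor-of-two splitting (the $+\|f'(x^k)\|^2$ from the mean decomposition against $-2\|f'(x^k)\|^2$ from the first variance term leaves $-\|f'(x^k)\|^2$). It is also worth stating explicitly that the uniform resampling index $r_k$ plays no role whatsoever in this particular bound, so that the reader does not expect an expectation over $r_k$ to appear.
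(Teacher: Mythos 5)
Your proposal is correct and follows essentially the same route as the paper: write $\|x^{k+1}-x^k\|^2=\gamma^2\|\nu_k\|^2$, split $\nu_k$ into its mean $f'(x^k)$ plus a mean-zero part, apply $\|u-v\|^2\le 2\|u\|^2+2\|v\|^2$, keep the exact variance identity on the $x^k$-term (yielding the surviving $-\gamma^2\|f'(x^k)\|^2$) while only upper-bounding the $\phi^k$-term, and then evaluate the expectations under $p_j=L_j/(n\bar L)$ so that $\tfrac{1}{np_j}=\bar L/L_j$. Your explicit remarks that the uniform index $r_k$ plays no role here and that the $-\|f'(x^k)\|^2$ term must be retained (unlike in Lemma~\ref{lemma:variance}) are both accurate and match the paper's intent.
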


\begin{proof}
We again follow the SAGA argument closely here

\begin{eqnarray*}
 &  & \mathbb{E}\left\Vert x^{k+1}-x^{k}\right\Vert ^{2}\\
 & = & \gamma^{2}\mathbb{E}\left\Vert \frac{\bar{L}}{L_{j}}\left[f_{j}^{\prime}(\phi_{j}^{k})-f_{j}^{\prime}(x^{k})\right]-\frac{1}{n}\sum_{i=1}^{n}f_{i}^{\prime}(\phi_{i}^{k})\right\Vert ^{2}\\
 & = & \gamma^{2}\mathbb{E}\left\Vert \frac{\bar{L}}{L_{j}}\left[f_{j}^{\prime}(\phi_{j}^{k})-f_{j}^{\prime}(x^{*})\right]-\frac{1}{n}\sum_{i=1}^{n}f_{i}^{\prime}(\phi_{i}^{k})-\frac{\bar{L}}{L_{j}}\left[f_{j}^{\prime}(x^{k})-f_{j}^{\prime}(x^{*})\right]-f^{\prime}(x^{k})\right\Vert ^{2}\\
 &  & +\stepsize^{2}\left\Vert f^{\prime}(x^{k})\right\Vert ^{2}\\
 & \leq & 2\gamma^{2}\mathbb{E}\left\Vert \frac{\bar{L}}{L_{j}}\left[f_{j}^{\prime}(\phi_{j}^{k})-f_{j}^{\prime}(x^{*})\right]-\frac{1}{n}\sum_{i=1}^{n}f_{i}^{\prime}(\phi_{i}^{k})\right\Vert ^{2}\\
 &  & +2\gamma^{2}\mathbb{E}\left\Vert \frac{\bar{L}}{L_{j}}\left[f_{j}^{\prime}(x^{k})-f_{j}^{\prime}(x^{*})\right]-f^{\prime}(x^{k})\right\Vert ^{2}+\stepsize^{2}\left\Vert f^{\prime}(x^{k})\right\Vert ^{2}\\
 & \leq & 2\gamma^{2}\mathbb{E}\left\Vert \frac{\bar{L}}{L_{j}}\left[f_{j}^{\prime}(\phi_{j}^{k})-f_{j}^{\prime}(x^{*})\right]\right\Vert ^{2}\\
 &  & +2\gamma^{2}\mathbb{E}\left\Vert \frac{\bar{L}}{L_{j}}\left[f_{j}^{\prime}(x^{k})-f_{j}^{\prime}(x^{*})\right]\right\Vert ^{2}-\stepsize^{2}\left\Vert f^{\prime}(x^{k})\right\Vert ^{2}.
\end{eqnarray*}
We can expand those expectations as follows:
\begin{eqnarray*}
\mathbb{E}\left\Vert \frac{\bar{L}}{L_{i}}\left[f_{j}^{\prime}(\phi_{j}^{k})-f_{j}^{\prime}(x^{*})\right]\right\Vert ^{2} & = & \frac{1}{n\bar{L}}\sum_{i}L_{i}\left\Vert \frac{\bar{L}}{L_{i}}\left[f_{j}^{\prime}(\phi_{j}^{k})-f_{j}^{\prime}(x^{*})\right]\right\Vert ^{2}\\
 & = & \frac{\bar{L}}{n}\sum_{i}\frac{1}{L_{i}}\left\Vert \left[f_{j}^{\prime}(\phi_{j}^{k})-f_{j}^{\prime}(x^{*})\right]\right\Vert ^{2},
\end{eqnarray*}
and similarly for $\mathbb{E}\left\Vert \frac{\bar{L}}{L_{i}}\left[f_{j}^{\prime}(x^{k})-f_{j}^{\prime}(x^{*})\right]\right\Vert ^{2}.$
\end{proof}
\noindent We now give the proof of part (b) of Proposition 1, which we state below.
\setcounter{proposition}{0}
\begin{proposition}[b]
\label{thm:saga-main-lyp}
If $\gamma = \frac{1}{4L}$, then Algorithm~3 has
\[
E\left[\left\Vert x^{k}-x^{*}\right\Vert ^{2}\right]\leq\left(1-\min\left\{ \frac{1}{3n},\frac{\mu}{8\bar{L}}\right\} \right)^{k}\left[\left\Vert x^{k}-x^{*}\right\Vert ^{2}+\frac{n}{2\bar{L}}\left(f(x^{0})-f(x^{*})\right)\right].
\]
\end{proposition}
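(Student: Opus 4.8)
The plan is to follow the Lyapunov-function argument of~\citet{defazio2014saga}, paralleling the proof of part~(a) in Appendix~A but using the function-value version of the potential so that the constant $C_b$ involves $f(x^0)-f(x^*)$ rather than a weighted sum of squared gradient norms. Concretely, I would take
\[
T^k = c\norm{x^k-x^*}^2 + \frac{c_\phi}{n}\sum_{i=1}^n\left[f_i(\phi_i^k) - f(x^*) - \langle f_i'(x^*),\phi_i^k-x^*\rangle\right]
\]
for constants $c,c_\phi>0$ to be fixed, and prove $\mathbb{E}[T^{k+1}] \leq (1-\rho)T^k$ with $\rho = \min\{\tfrac{1}{3n},\tfrac{\mu}{8\bar{L}}\}$. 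The sum $\sum_i[\cdots]$ is nonnegative (using $\sum_i f_i(x^*)=nf(x^*)$ and convexity of the $f_i$) and, since $\phi_i^0=x^0$ for all $i$ and $f'(x^*)=0$, equals $n(f(x^0)-f(x^*))$ at $k=0$; this is what produces $C_b$.

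First I would use $\mathbb{E}[\nu_k]=f'(x^k)$ to write $\mathbb{E}\norm{x^{k+1}-x^*}^2 = \norm{x^k-x^*}^2 + 2\gamma\langle f'(x^k),x^*-x^k\rangle + \mathbb{E}\norm{x^{k+1}-x^k}^2$, bound the last term by Lemma~\ref{lem:saga-error-bound} (after taking the expectation over the NUS index $j_k$, which turns $\mathbb{E}\|\tfrac{\bar{L}}{L_{j_k}}[\cdot_{j_k}]\|^2$ into $\tfrac{\bar{L}}{n}\sum_i\tfrac{1}{L_i}\|\cdot_i\|^2$), and bound the inner product by~\eqref{cor:saga-ip-bound}. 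That inequality contributes (times $2\gamma$) a term $-\tfrac{\gamma\mu}{2}\norm{x^k-x^*}^2$, a term $-2\gamma[f(x^k)-f(x^*)]$, and a term $-\tfrac{\gamma}{2n}\sum_i\tfrac{1}{L_i}\norm{f_i'(x^k)-f_i'(x^*)}^2$; with $\gamma=\tfrac{1}{4\bar{L}}$ the last of these exactly cancels the positive $2\gamma^2\bar{L}\cdot\tfrac{1}{n}\sum_i\tfrac{1}{L_i}\norm{f_i'(x^k)-f_i'(x^*)}^2$ from Lemma~\ref{lem:saga-error-bound}. Because the memory is refreshed at the \emph{independent uniform} index $r_k$, the memory term of $T^k$ factorizes cleanly, $\mathbb{E}[\tfrac{c_\phi}{n}\sum_i(\cdot)^{k+1}] = (1-\tfrac1n)\tfrac{c_\phi}{n}\sum_i(\cdot)^k + \tfrac{c_\phi}{n}[f(x^k)-f(x^*)]$, again using $f'(x^*)=0$. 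Finally I would invoke~\eqref{cor:grad-diff-phii} to dominate the remaining positive $\phi$-dependent term of Lemma~\ref{lem:saga-error-bound} by the memory term of $T^k$, and~\eqref{eq:scGrad} to absorb any leftover $-\gamma^2\norm{f'(x^k)}^2$ into $-2\gamma^2\mu[f(x^k)-f(x^*)]$.

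Collecting terms, $\mathbb{E}[T^{k+1}]-(1-\rho)T^k$ becomes a linear combination of the four nonnegative quantities $\norm{x^k-x^*}^2$, $\tfrac1n\sum_i\tfrac1{L_i}\norm{f_i'(x^k)-f_i'(x^*)}^2$, $\tfrac1n\sum_i[f_i(\phi_i^k)-f(x^*)-\langle f_i'(x^*),\phi_i^k-x^*\rangle]$, and $f(x^k)-f(x^*)$, and it suffices to drive each coefficient to be $\leq 0$. The $\norm{x^k-x^*}^2$ coefficient is $c(1-\tfrac{\gamma\mu}{2})-(1-\rho)c$, which is nonpositive iff $\rho\leq\tfrac{\gamma\mu}{2}=\tfrac{\mu}{8\bar{L}}$; the squared-gradient-difference coefficient vanishes at $\gamma=\tfrac1{4\bar{L}}$; the memory coefficient forces roughly $\tfrac{c}{4\bar{L}}\leq c_\phi(\tfrac1n-\rho)$ and the $f(x^k)-f(x^*)$ coefficient forces $\tfrac{c_\phi}{n}\leq 2c\gamma=\tfrac{c}{2\bar{L}}$; taking $c_\phi=\tfrac{nc}{2\bar{L}}$ (so $c$ cancels everywhere) meets both as long as $\rho\leq\tfrac{1}{3n}$. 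Hence $\rho=\min\{\tfrac1{3n},\tfrac{\mu}{8\bar{L}}\}$. Chaining $\mathbb{E}[T^k]\leq(1-\rho)^kT^0$, dividing by $c$, and plugging in $T^0 = c\norm{x^0-x^*}^2 + \tfrac{c_\phi}{n}\sum_i[f_i(x^0)-f(x^*)-\langle f_i'(x^*),x^0-x^*\rangle] = c\norm{x^0-x^*}^2 + \tfrac{nc}{2\bar{L}}(f(x^0)-f(x^*))$ gives the claimed bound with $C_b=\tfrac{n}{2\bar{L}}(f(x^0)-f(x^*))$.

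I expect the main obstacle to be the bookkeeping that reconciles the $\tfrac{\bar{L}}{L_i}$-weighting introduced by the NUS sampling of $j_k$ with the uniform $\tfrac1n\sum_i\tfrac1{L_i}\norm{\cdot}^2$ weighting produced by~\eqref{cor:saga-ip-bound} and~\eqref{cor:grad-diff-phii}; these reconcile because $\mathbb{E}_{j_k}[\tfrac{\bar{L}^2}{L_{j_k}^2}\norm{v_{j_k}}^2]=\sum_i\tfrac{L_i}{n\bar{L}}\cdot\tfrac{\bar{L}^2}{L_i^2}\norm{v_i}^2=\tfrac{\bar{L}}{n}\sum_i\tfrac1{L_i}\norm{v_i}^2$, and it is exactly this identity together with $\gamma=\tfrac1{4\bar{L}}$ that cancels the two troublesome terms. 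The second delicate point is that $r_k$ is sampled independently and uniformly rather than from the NUS distribution, which keeps the memory update a clean geometric contraction plus an $f(x^k)-f(x^*)$ source term; the remaining work — checking that $c_\phi$, $c$, and the two regimes of $\rho$ can be chosen so that all four coefficients are simultaneously nonpositive — is the fiddly constant-chasing part of the argument.
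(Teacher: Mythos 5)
Your proposal is correct and follows essentially the same route as the paper's Appendix~B proof: the same Lyapunov function, the same three auxiliary inequalities (the variance bound of Lemma~\ref{lem:saga-error-bound}, the inner-product bound~\eqref{cor:saga-ip-bound}, and~\eqref{cor:grad-diff-phii} with~\eqref{eq:scGrad}), the same cancellation of the squared-gradient-difference terms at $\gamma = \frac{1}{4\bar{L}}$, and the same two constraints yielding $\rho = \min\{\frac{1}{3n},\frac{\mu}{8\bar{L}}\}$. The only cosmetic difference is that you carry a free coefficient $c_\phi$ on the memory term and then fix the ratio $c_\phi/c = n/(2\bar{L})$, whereas the paper normalizes that term's coefficient to one and tunes only $c$ (also keeping the extra $-2c\gamma^2\mu$ slack in the $f(x^k)-f(x^*)$ coefficient, which you harmlessly discard).
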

\begin{proof}
We
define the Lyapunov function as
\[
T^{k}=\frac{1}{n}\sum_{i}f_{i}(\phi_{i}^{k})-f(x^{*})-\frac{1}{n}\sum_{i}\left\langle f_{i}^{\prime}(x^{*}),\phi_{i}^{k}-x^{*}\right\rangle +c\left\Vert x^{k}-x^{*}\right\Vert ^{2}.
\]
The expectations of the first terms in $T^{k+1}$ are straightforward to simplify:
\begin{eqnarray*}
\mathbb{E}\left[\frac{1}{n}\sum_{i}f_{i}(\phi_{i}^{k+1})\right] & = & \frac{1}{n}f(x^{k})+\left(1-\frac{1}{n}\right)\frac{1}{n}\sum_{i}f_{i}(\phi_{i}^{k}),\\
\mathbb{E}\left[-\frac{1}{n}\sum_{i}\left\langle f_{i}^{\prime}(x^{*}),\phi_{i}^{k+1}-x^{*}\right\rangle \right] & = & -\left(1-\frac{1}{n}\right)\frac{1}{n}\sum_{i}\left\langle f_{i}^{\prime}(x^{*}),\phi_{i}^{k}-x^{*}\right\rangle .
\end{eqnarray*}
Note that these terms make use of the uniformly sampled $\phi_{r}^{k+1}=x^{k}$
value. 
For the change in the last term of $T^k$ we expand the quadratic and
apply $\mathbb{E}[x^{k+1}]=x^{k}-\stepsize f^{\prime}(x^{k})$ to
simplify the inner product term:
\begin{align*}
 & c\mathbb{E}\left\Vert x^{k+1}-x^{*}\right\Vert ^{2}\\
= & c\mathbb{E}\left\Vert x^{k}-x^{*}+x^{k+1}-x^{k}\right\Vert ^{2}\\
= & c\left\Vert x^{k}-x^{*}\right\Vert ^{2}+2c\mathbb{E}\left[\left\langle x^{k+1}-x^{k},x^{k}-x^{*}\right\rangle \right]+c\mathbb{E}\left\Vert x^{k+1}-x^{k}\right\Vert ^{2}\\
= & c\left\Vert x^{k}-x^{*}\right\Vert ^{2}-2c\stepsize\left\langle f^{\prime}(x^{k}),x^{k}-x^{*}\right\rangle +c\mathbb{E}\left\Vert x^{k+1}-x^{k}\right\Vert ^{2}.
\end{align*}
We now apply Lemma~\ref{lem:saga-error-bound} to bound the error
term $c\mathbb{E}\left\Vert x^{k+1}-x^{k}\right\Vert ^{2}$, giving:
\begin{eqnarray*}
 &  & c\mathbb{E}\left\Vert x^{k+1}-x^{*}\right\Vert ^{2}\\
 & \leq & c\left\Vert x^{k}-x^{*}\right\Vert ^{2}-c\stepsize^{2}\left\Vert f^{\prime}(x^{k})\right\Vert ^{2}\\
 &  & -2c\stepsize\left\langle f^{\prime}(x^{k}),x^{k}-x^{*}\right\rangle \\
 &  & +2c\gamma^{2}\frac{\bar{L}}{n}\sum_{i}\frac{1}{L_{i}}\left\Vert f_{i}^{\prime}(\phi_{i}^{k})-f_{i}^{\prime}(x^{*})\right\Vert ^{2}+2c\gamma^{2}\frac{\bar{L}}{n}\sum_{i}\frac{1}{L_{i}}\left\Vert f_{i}^{\prime}(x^{k})-f_{i}^{\prime}(x^{*})\right\Vert ^{2}.
\end{eqnarray*}
Now we bound $-2c\gamma\left\langle f^{\prime}(x),x-x^{*}\right\rangle $
with~\eqref{cor:saga-ip-bound} and then apply~\eqref{cor:grad-diff-phii}
to bound $\mathbb{E}\left\Vert f_{j}^{\prime}(\phi_{j})-f_{j}^{\prime}(x^{*})\right\Vert ^{2}$:
\begin{align*}
c\mathbb{E}\left\Vert x^{k+1}-x^{*}\right\Vert ^{2}\leq & \left(c-\frac{1}{2}c\stepsize\mu\right)\left\Vert x^{k}-x^{*}\right\Vert ^{2}\\
 & +\left(2c\stepsize^{2}\bar{L}-\frac{1}{2}c\gamma\right)\frac{1}{n}\sum_{i}\frac{1}{L_{i}}\left\Vert f_{i}^{\prime}(x^{k})-f_{i}^{\prime}(x^{*})\right\Vert ^{2}\!-\! c\stepsize^{2}\left\Vert f^{\prime}(x^{k})\right\Vert ^{2}\\
 & -2c\stepsize\left[f(x^{k})-f(x^{*})\right]\\
 & +\left(4c\stepsize^{2}\bar{L}\right)\frac{1}{n}\sum_{i}\left[f_{i}(\phi_{i})-f_{i}(x^{*})-\left\langle f_{i}^{\prime}(x^{*}),\phi_{i}-x^{*}\right\rangle \right].
\end{align*}
We can now combine the bounds we have derived for each term in $T$,
and pull out a fraction $\frac{1}{\kappa}$ of $T^{k}$ (for any $\kappa$
at this point). Together with~\eqref{eq:scGrad} this yields:{\small{}
\begin{align}
\mathbb{E}[T^{k+1}]-T^{k}\leq & -\frac{1}{\kappa}T^{k}+\left(\frac{1}{n}-2c\stepsize-2c\stepsize^{2}\mu\right)\left[f(x^{k})-f(x^{*})\right]\nonumber \\
 & +\left(\frac{1}{\kappa}+4c\stepsize^{2}\bar{L}-\frac{1}{n}\right)\left[\frac{1}{n}\sum_{i}f_{i}(\phi_{i}^{k})-f(x^{*})-\frac{1}{n}\sum_{i}\left\langle f_{i}^{\prime}(x^{*}),\phi_{i}^{k}-x^{*}\right\rangle \right]\nonumber \\
 & +\left(\frac{1}{\kappa}-\frac{1}{2}\stepsize\mu\right)c\left\Vert x^{k}-x^{*}\right\Vert ^{2}+\left(2\stepsize\bar{L}-\frac{1}{2}\right)c\stepsize\frac{1}{n}\sum_{i}\frac{1}{L_{i}}\left\Vert f_{i}^{\prime}(x^{k})-f_{i}^{\prime}(x^{*})\right\Vert ^{2}.\label{eq:constants-strong}
\end{align}
}{\small \par}
Note that the term in square brackets in the second row is positive in light of~\eqref{cor:grad-diff-phii}.
We now attempt to find constants
that satisfy the required relations. We start with naming
the constants that we need to be non-positive:
\[
c_{1}=\frac{1}{n}-2c\stepsize-2c\stepsize^{2}\mu,
\]
\[
c_{2}=\frac{1}{\kappa}+4c\stepsize^{2}\bar{L}-\frac{1}{n},
\]
\[
c_{3}=\frac{1}{\kappa}-\frac{1}{2}\stepsize\mu,
\]
\[
c_{4}=2\stepsize\bar{L}-\frac{1}{2}.
\]
Recall that we are using the step size $\gamma=1/4\bar{L}$, and thus $c_4=0$.
Setting $c_{1}$ to zero gives
\[
c=\frac{1}{2\gamma(1-\gamma\mu)n},
\]
which is positive since $\gamma\mu < 1$.
Now we look at the restriction that $c_{2}\leq0$ places on $\kappa$:
\begin{eqnarray*}
 \frac{1}{\kappa}+4c\stepsize^{2}\bar{L}-\frac{1}{n}
 & = & \frac{1}{\kappa}+\frac{4\gamma\bar{L}}{2(1-\gamma\mu)n}-\frac{1}{n}\\
 & = & \frac{1}{\kappa}+\frac{1}{2(1-\gamma\mu)n}-\frac{1}{n}\\
 & = & \frac{1}{\kappa}+\frac{1}{2(1-\mu/4\bar{L})n}-\frac{1}{n}\\
 & \leq & \frac{1}{\kappa}+\frac{1}{2(1-\bar{L}/4\bar{L})n}-\frac{1}{n}\\
 & = & \frac{1}{\kappa}+\frac{2}{3n}-\frac{1}{n}\\
 & = & \frac{1}{\kappa}-\frac{1}{3n},
\end{eqnarray*}
\[
\therefore\frac{1}{\kappa}\leq\frac{1}{3n}.
\]
We also have the restriction from $c_{3}=\frac{1}{\kappa}-\frac{1}{2}\stepsize\mu$
of
\[
\frac{1}{\kappa}\leq\frac{\mu}{8\bar{L}},
\]
therefore we can take
\[
\frac{1}{\kappa}=\min\left\{ \frac{1}{3n},\frac{\mu}{8\bar{L}}\right\} .
\]
Note that $c\left\Vert x^{k}-x^{*}\right\Vert ^{2}\leq T^{k}$, and
therefore by chaining expectations and plugging in constants we get:
\[
E\left[\left\Vert x^{k}-x^{*}\right\Vert ^{2}\right]\leq\left(1-\min\left\{ \frac{1}{3n},\frac{\mu}{8\bar{L}}\right\} \right)^{k}\left[\left\Vert x^{k}-x^{*}\right\Vert ^{2}+\frac{n}{2\bar{L}}\left(f(x^{0})-f(x^{*})\right)\right].
\]
\end{proof}

\section*{Appendix C: Test Error Plots for All Methods}

\begin{figure*}
\begin{center}
 \fig{.35}{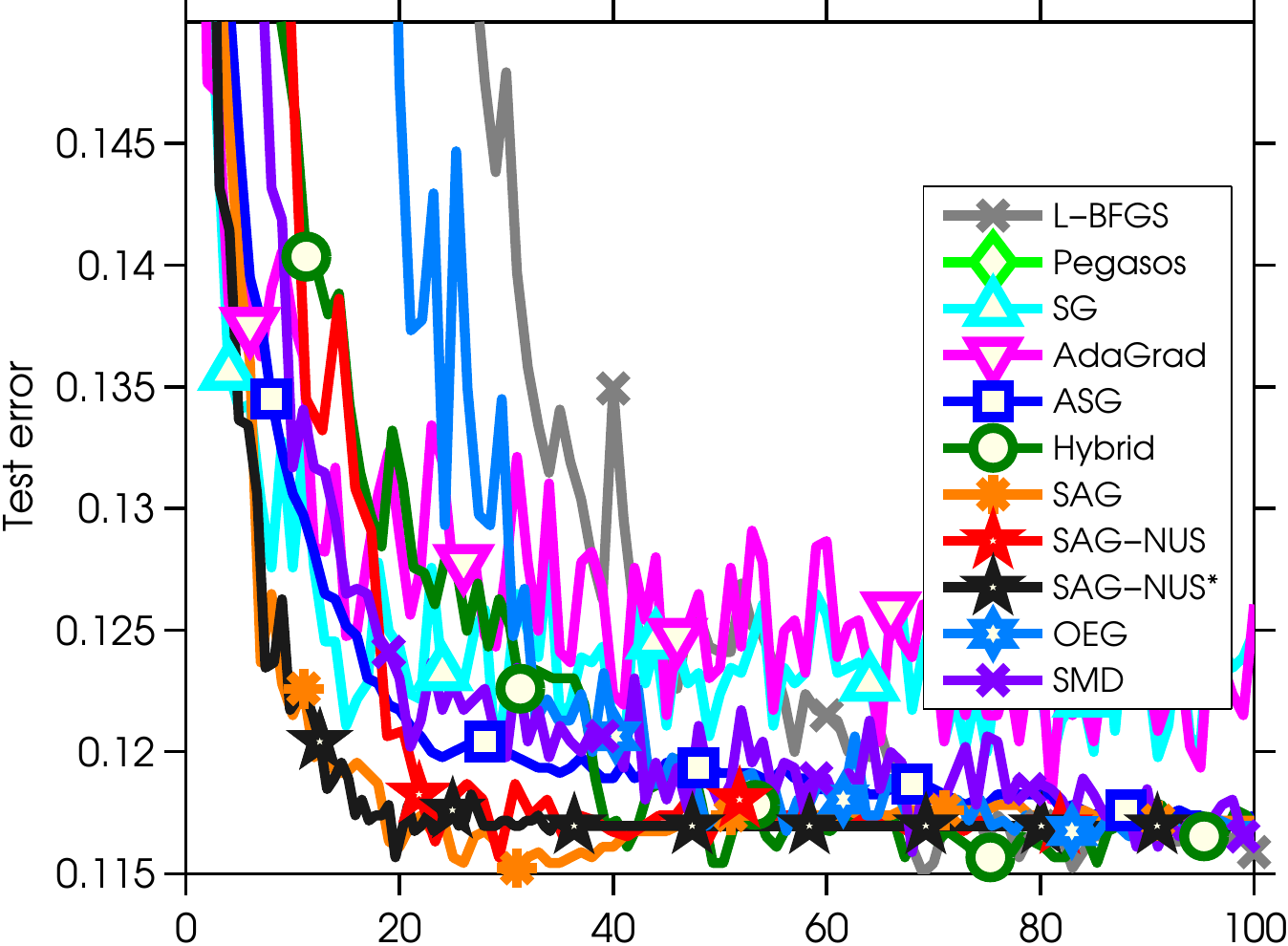}
\fig{.35}{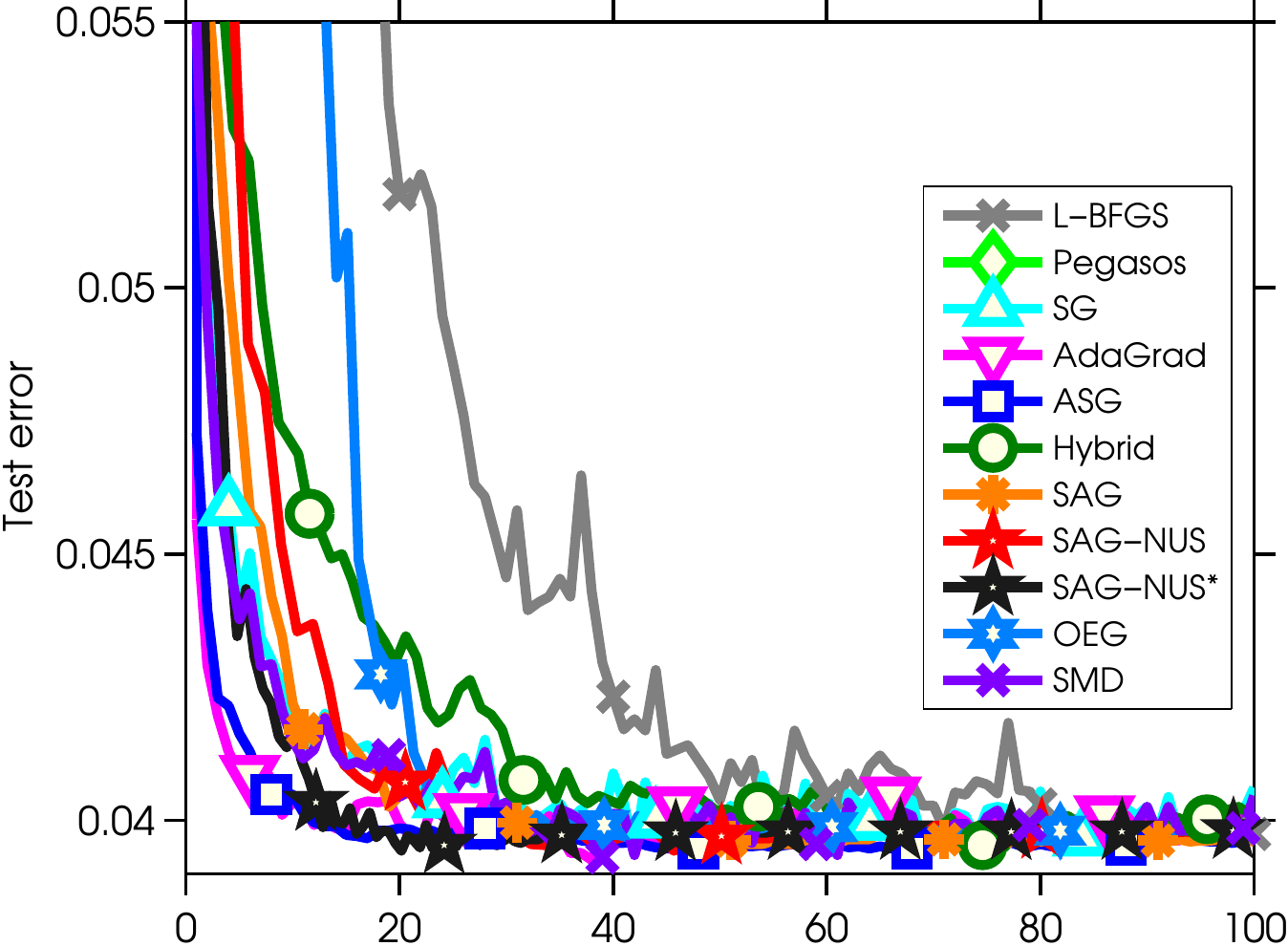}\\
\fig{.35}{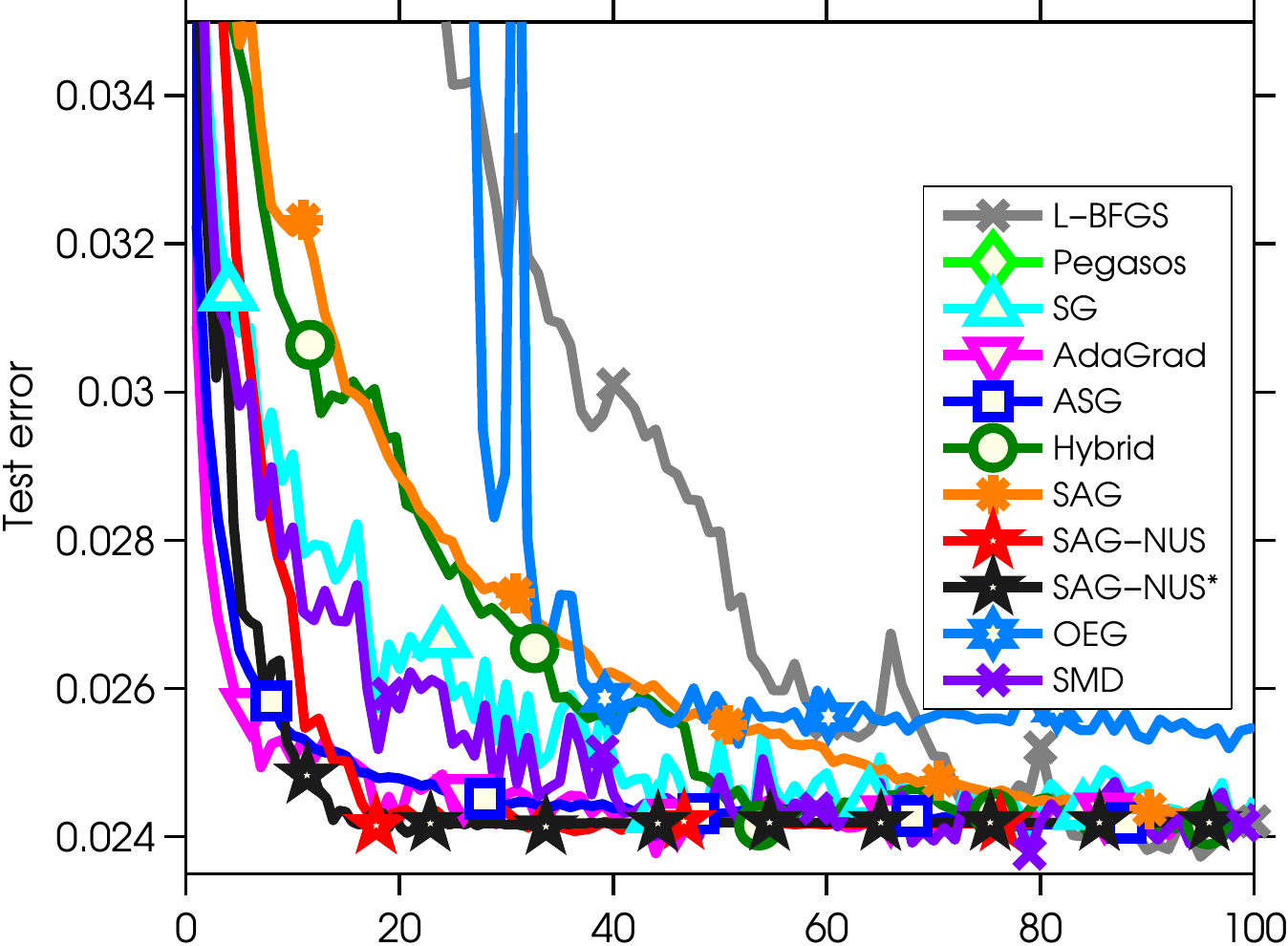}
\fig{.35}{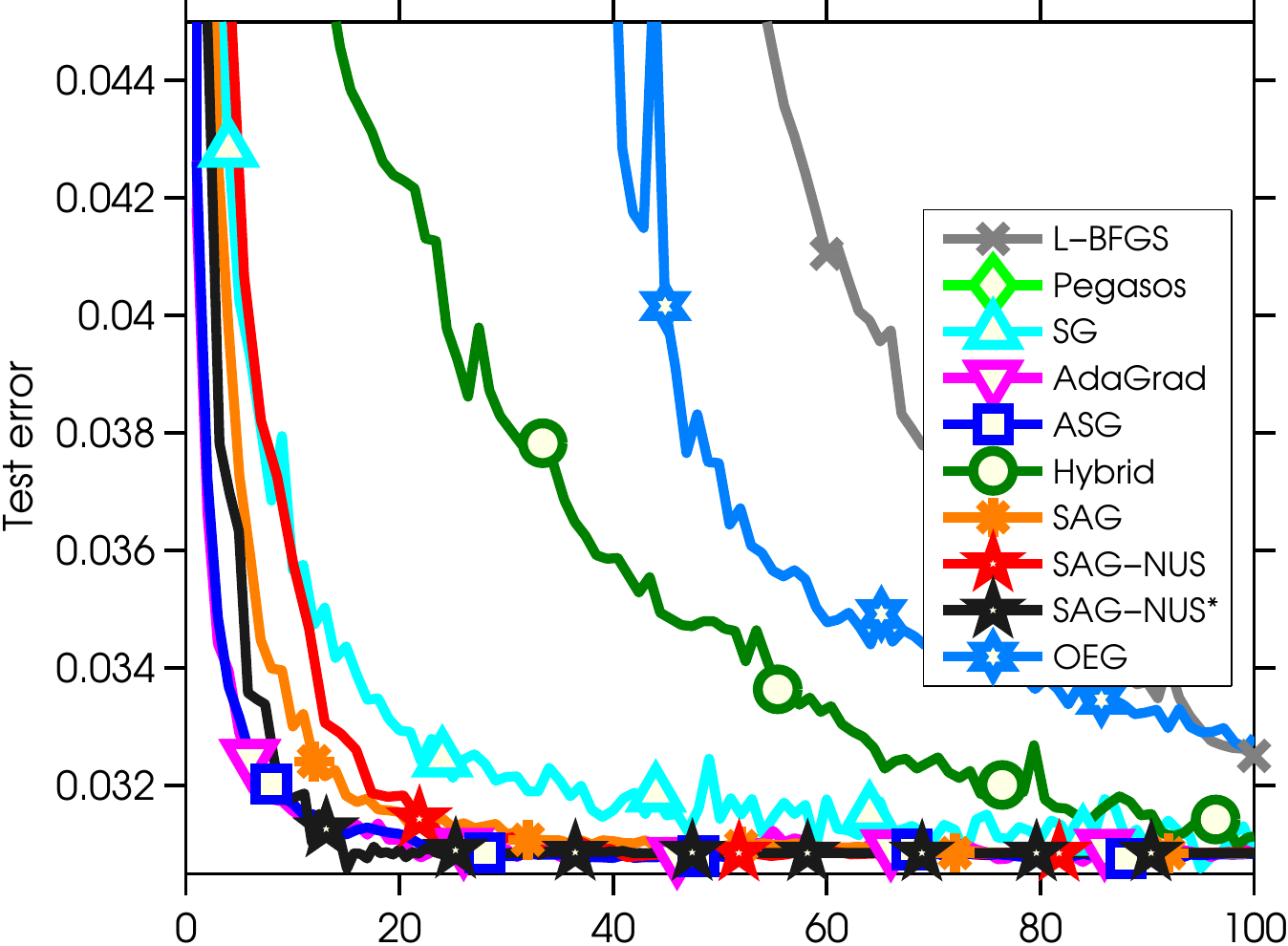}
\end{center}
\caption{Test error against effective number of passes for different deterministic, stochastic, and semi-stochastic optimization strategies (this figure is best viewed in colour). Top-left: OCR, Top-right: CoNLL-2000, bottom-left: CoNLL-2002, bottom-right: POS-WSJ.}
\label{fig:test}
\end{figure*}

In the main body we only plotted test error for a subset of the methods. In Figure~\ref{fig:test} we plot the test error of all methods considered in Figure 1. Note that Pegasos does not appear on the plot (despite being in the legend) because its values exceed the maximum plotted values. In these plots we see that the SAG-NUS methods perform similarly to the best among the optimally-tuned stochastic gradient methods in terms of test error, despite the lack of tuning required to apply these methods.

\section*{Appendix D: Improved Results for OEG}

\begin{figure}
\begin{center}
 \fig{.35}{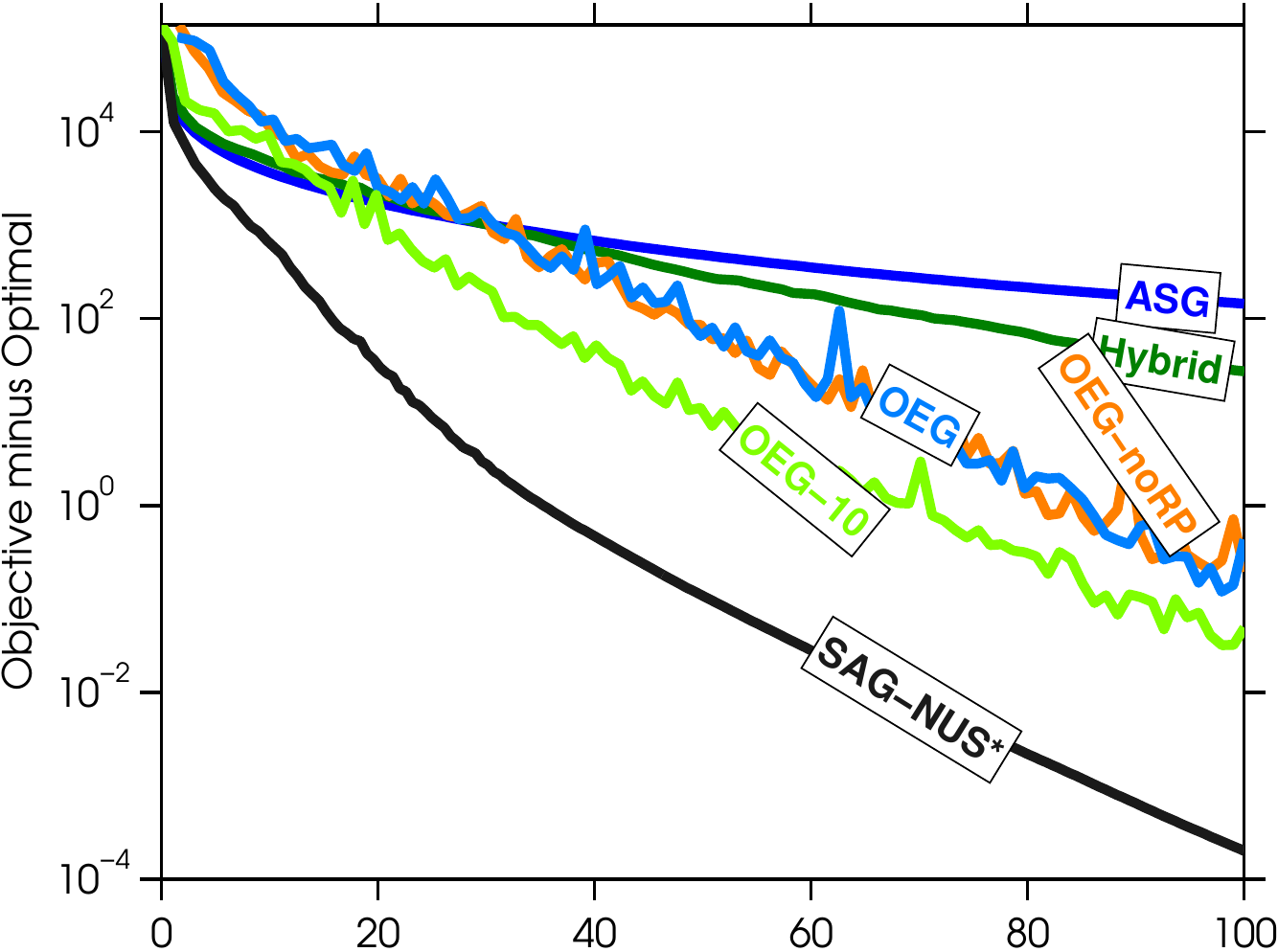}
\fig{.35}{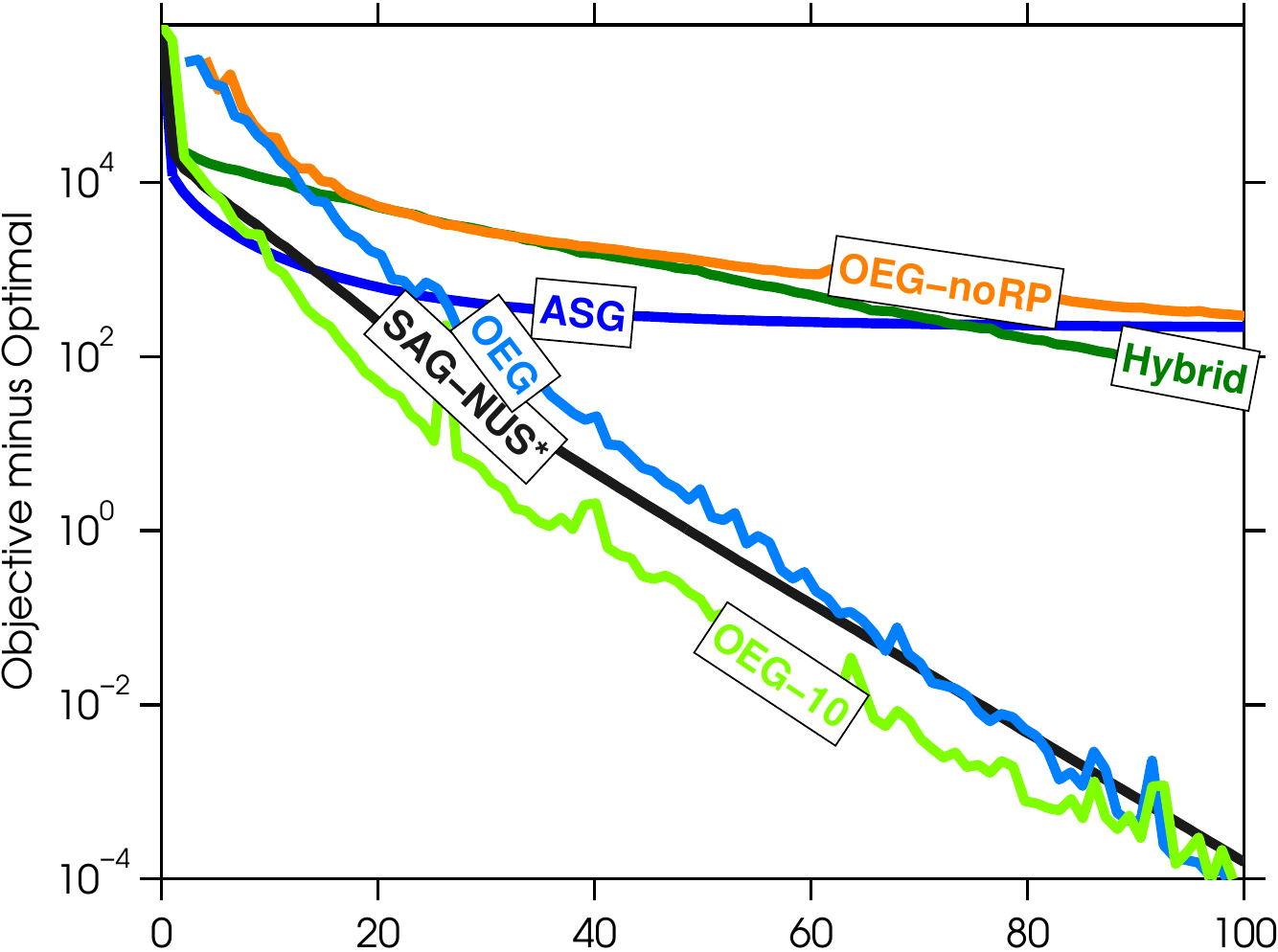}\\
\fig{.35}{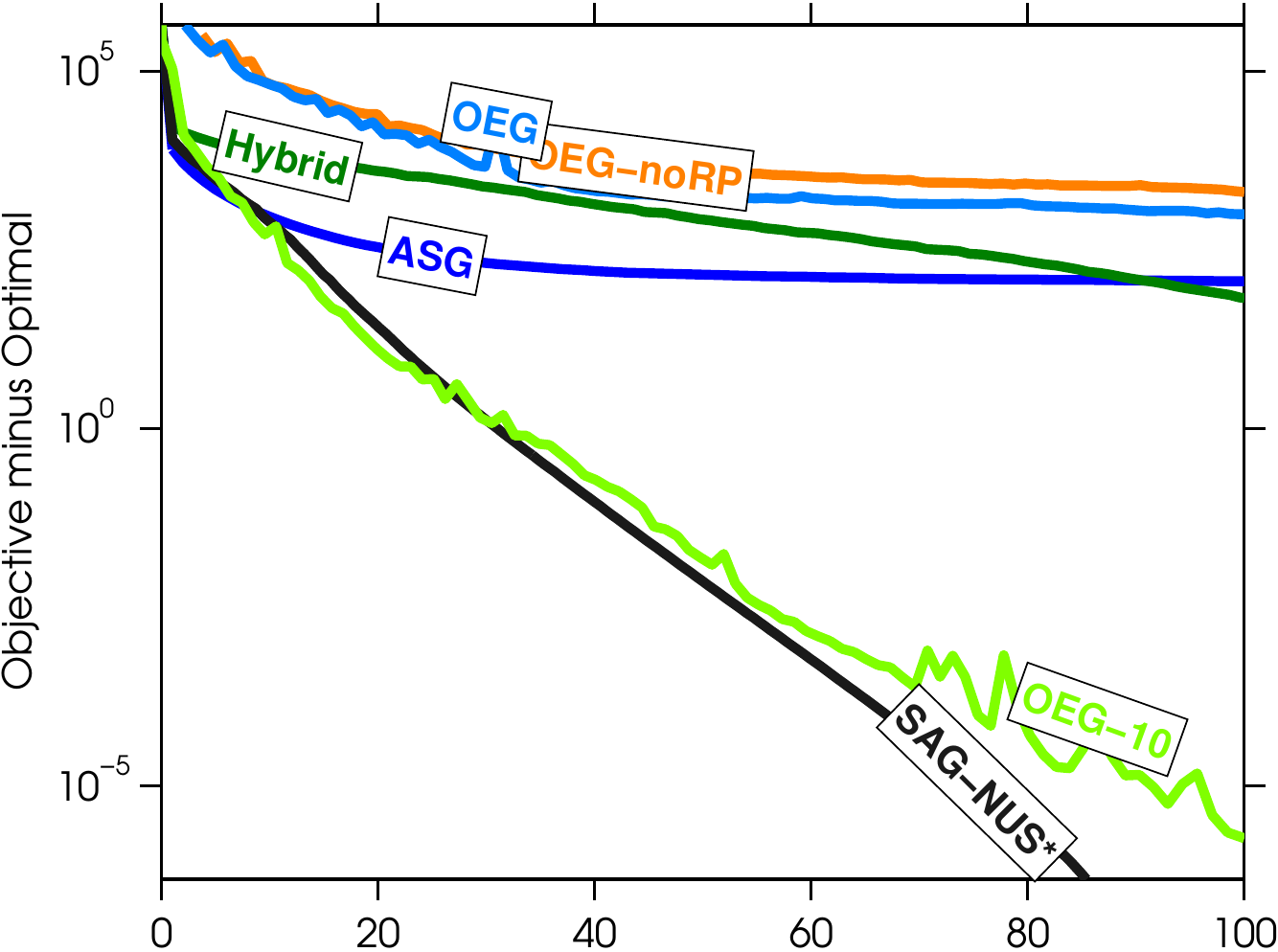}
\fig{.35}{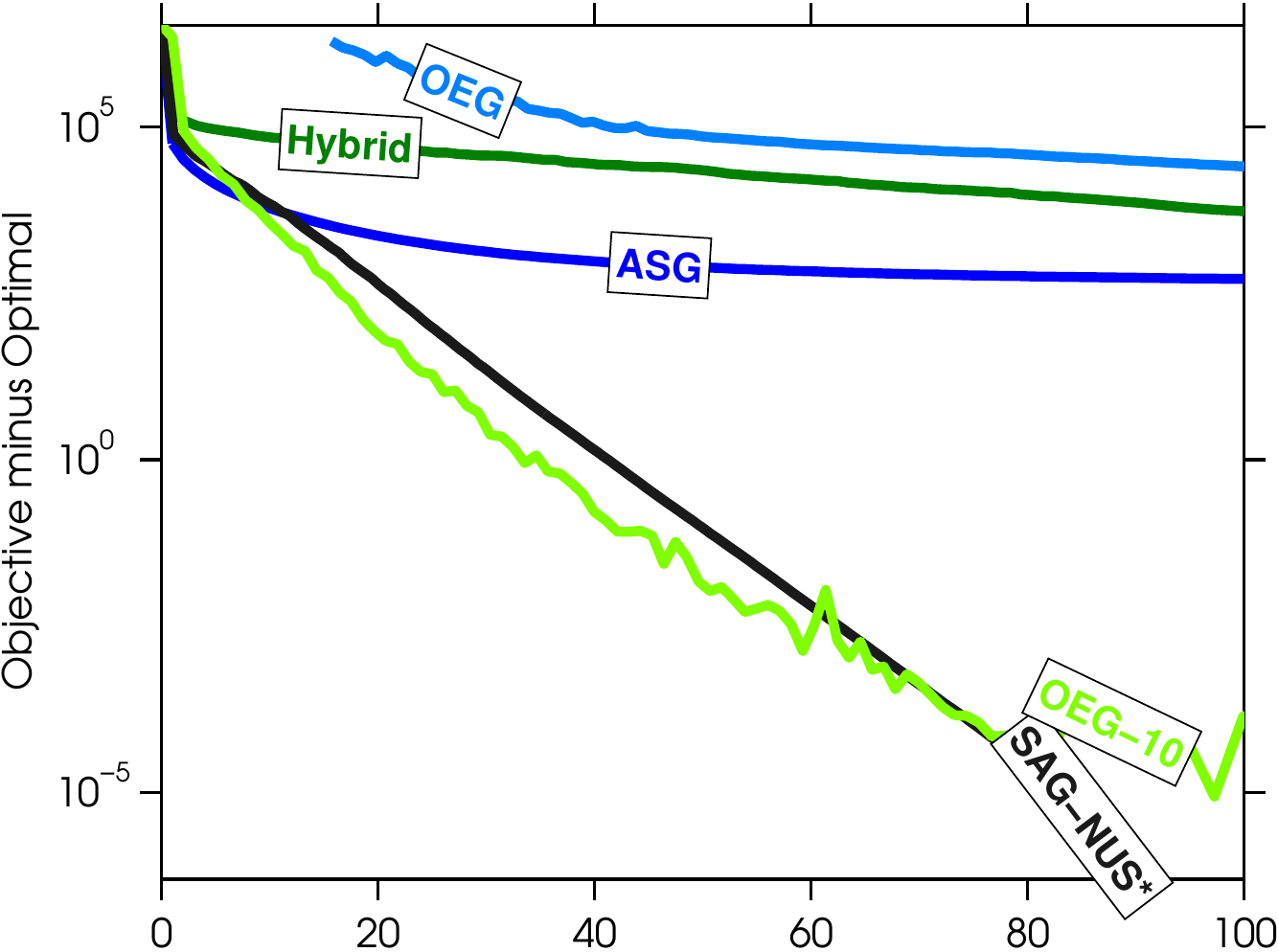} 
\end{center}
\caption{Objective minus optimal objective value against effective number of passes for different variants of OEG. Top-left: OCR, Top-right: CoNLL-2000, bottom-left: CoNLL-2002, bottom-right: POS-WSJ.}
\label{fig:trainOEG}
\end{figure}


Owing to the high variance of the performance of the OEG method, we explored whether better performance could be obtained with the OEG method. The two most salient observations from these experiments where that (i) utilizing a random permutation on the first pass through the data seems to be crucial to performance, and (ii) that better performance could be obtained on the two datasets where OEG performed poorly by using a different initialization. In particular, better performance could be obtained by initializing the parts with the correct labels to a larger value, such as 10. In Figure~\ref{fig:trainOEG}, we plot the performance of the OEG method without using the random permutation (\emph{OEG-noRP}) as well as OEG with this initialization (\emph{OEG-10}). Removing the random permutation makes OEG perform much worse on one of the datasets, while using the different initialization makes OEG perform nearly as well as SAG-NUS* on the datasets where previously it performed poorly (although it does not make up the performance gap on the remaining data set). Performance did not further improve by using even larger values in the initialization, and using a value that was too large lead to numerical problems.

\section*{Appendix E: Runtime Plots}

\begin{figure*}
\begin{center}
 \fig{.35}{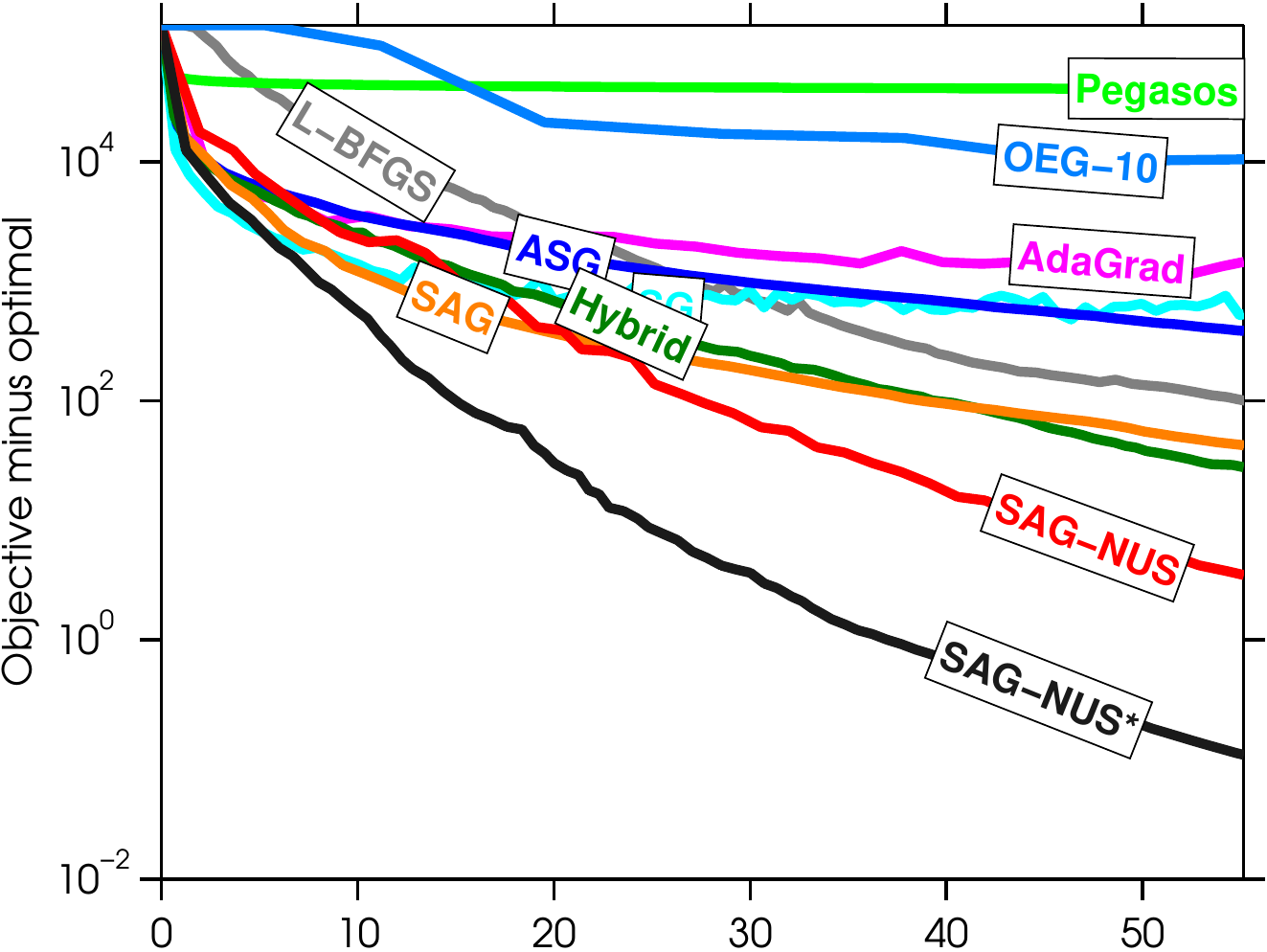}
\fig{.35}{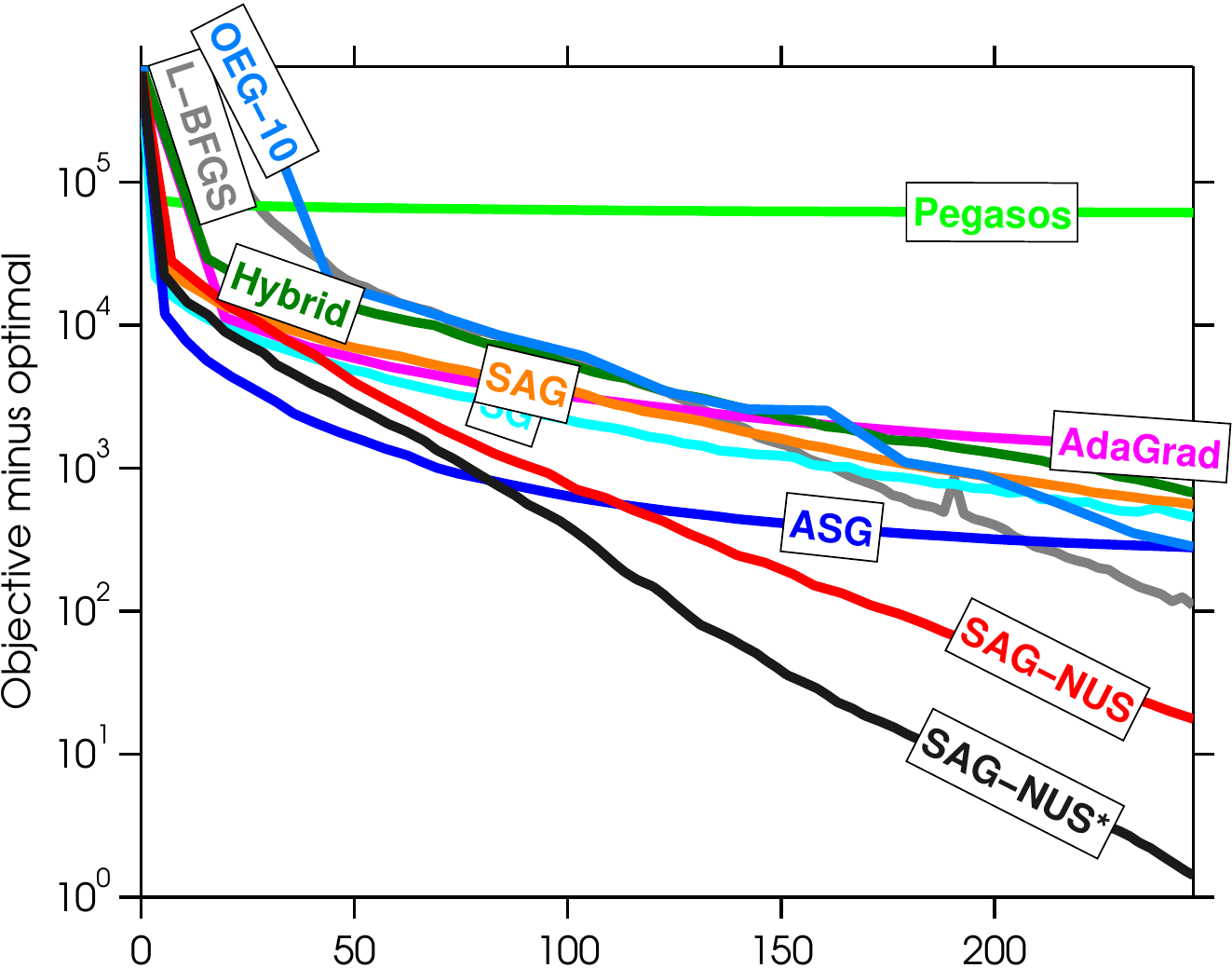}\\
\fig{.35}{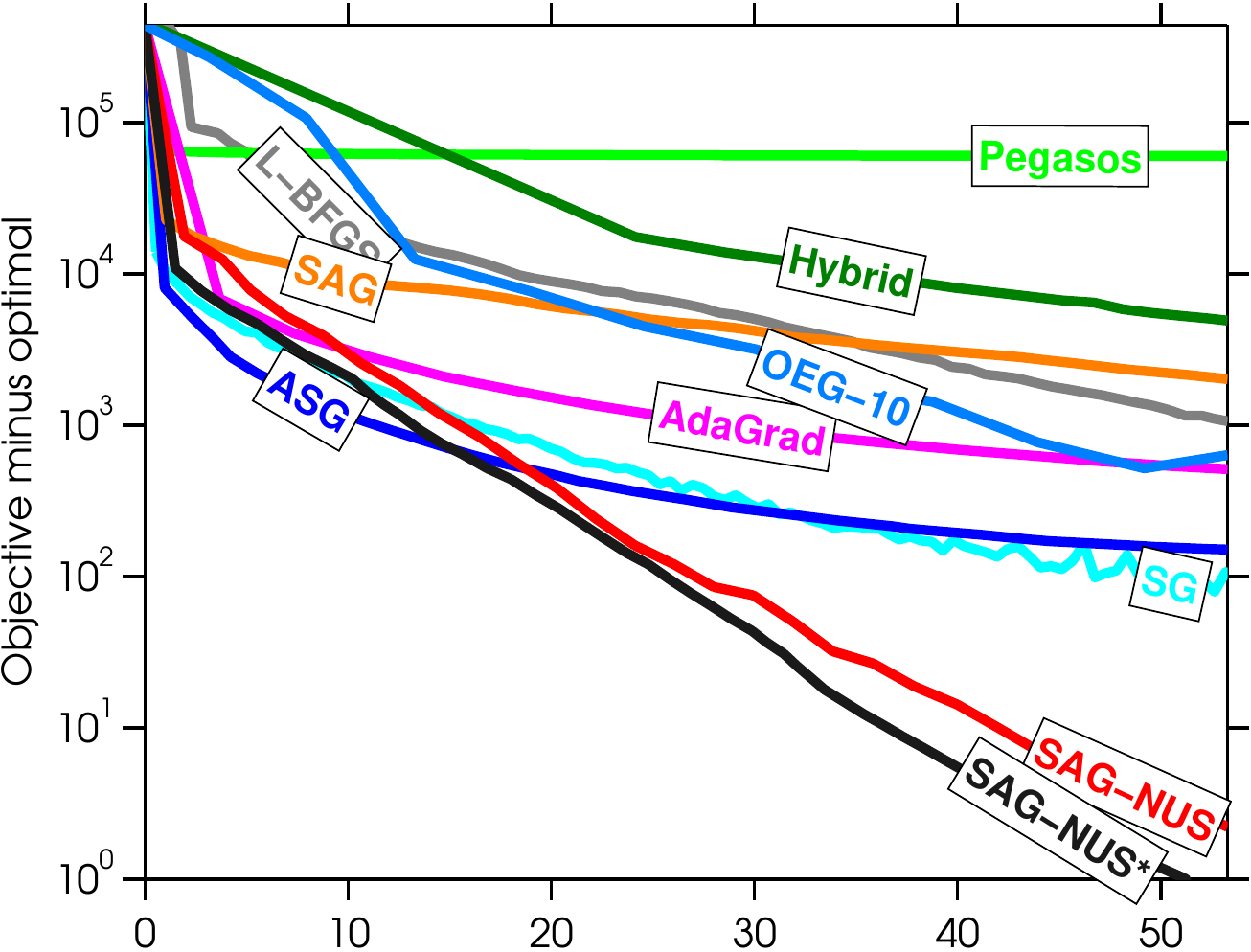}
\fig{.35}{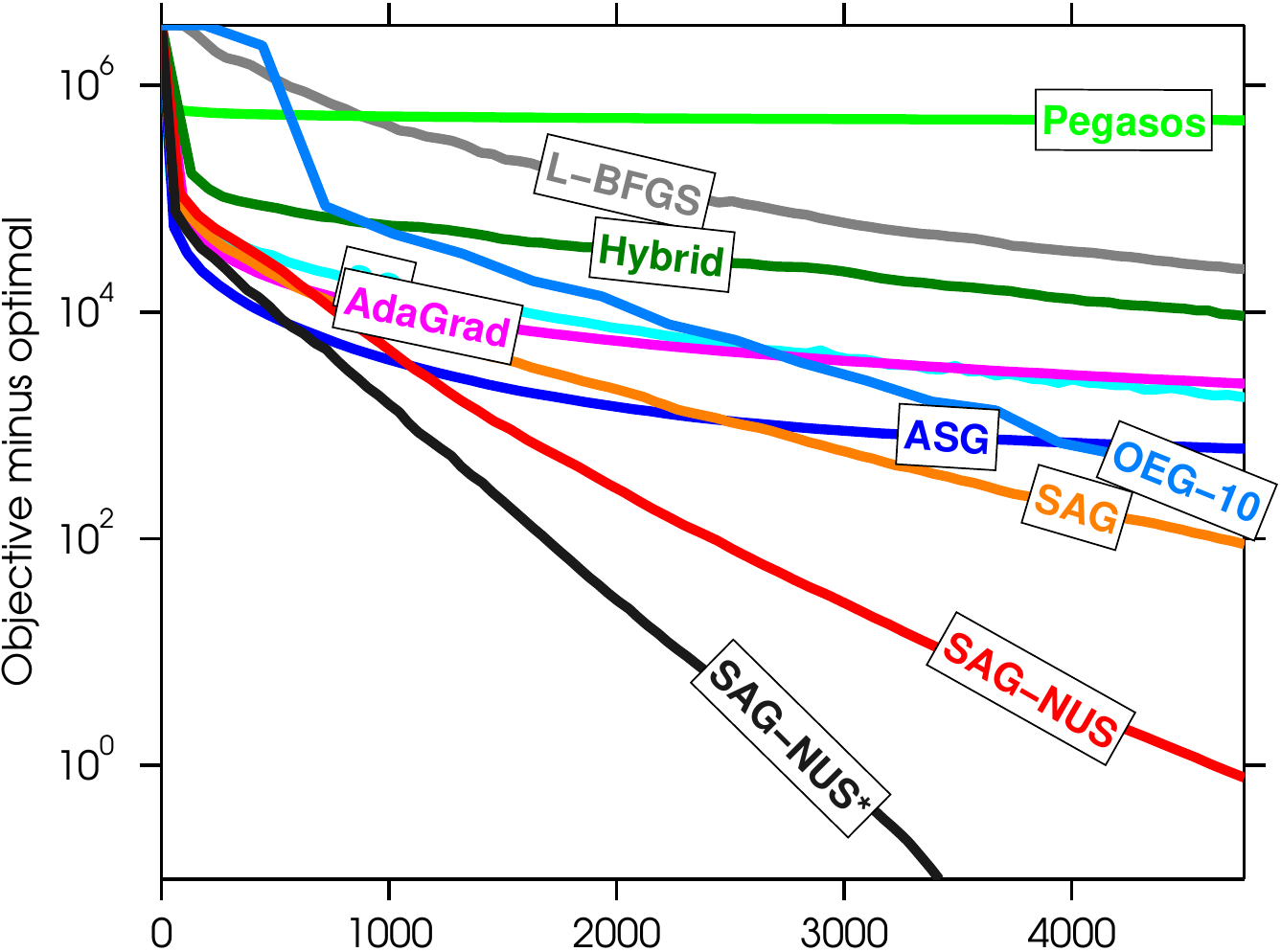}
\end{center}
\caption{Objective minus optimal objective value against time for different deterministic, stochastic, and semi-stochastic optimization strategies. Top-left: OCR, Top-right: CoNLL-2000, bottom-left: CoNLL-2002, bottom-right: POS-WSJ.}
\label{fig:trainTime}
\end{figure*}

\begin{figure*}
\begin{center}
 \fig{.35}{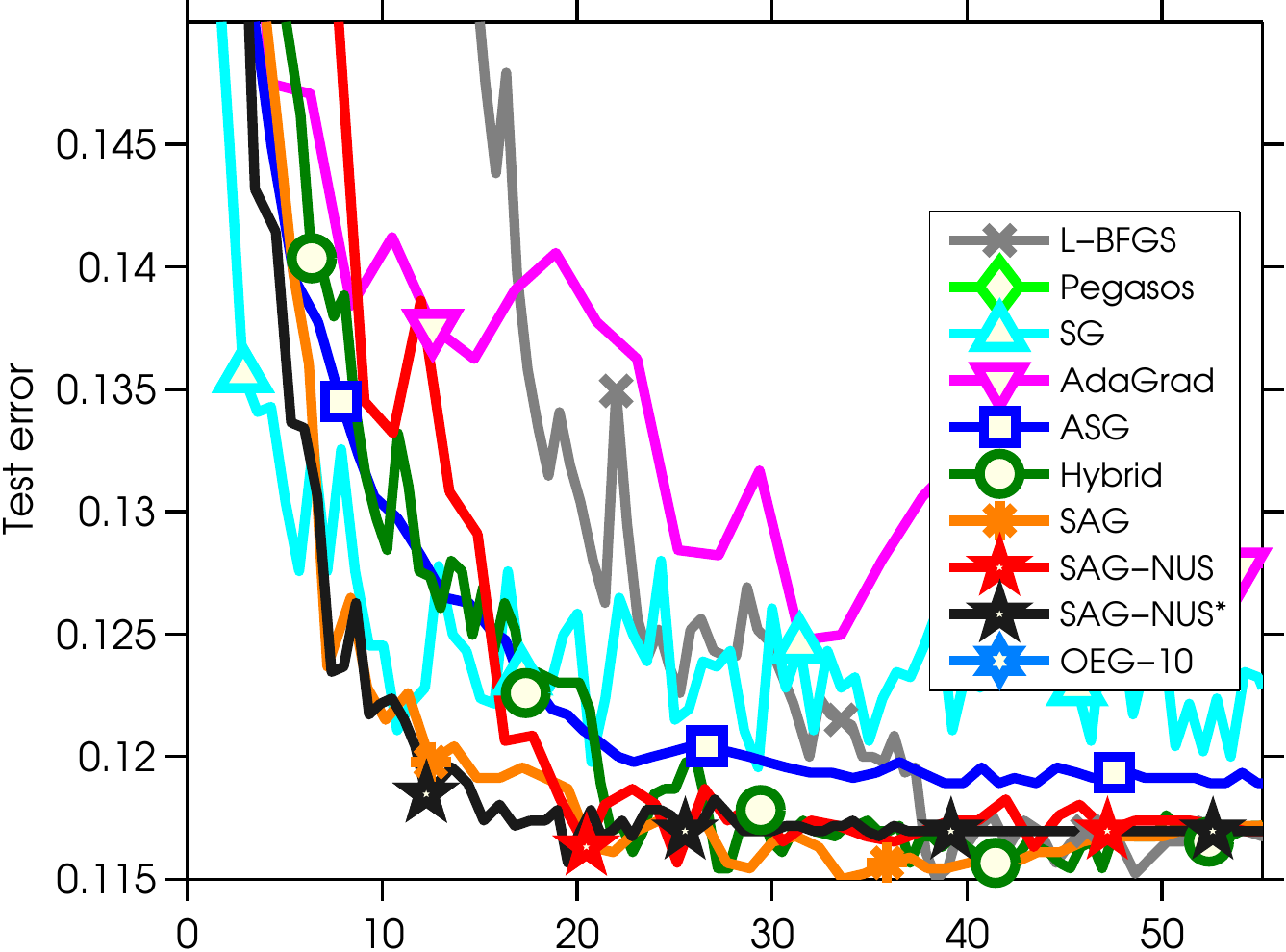}
\fig{.35}{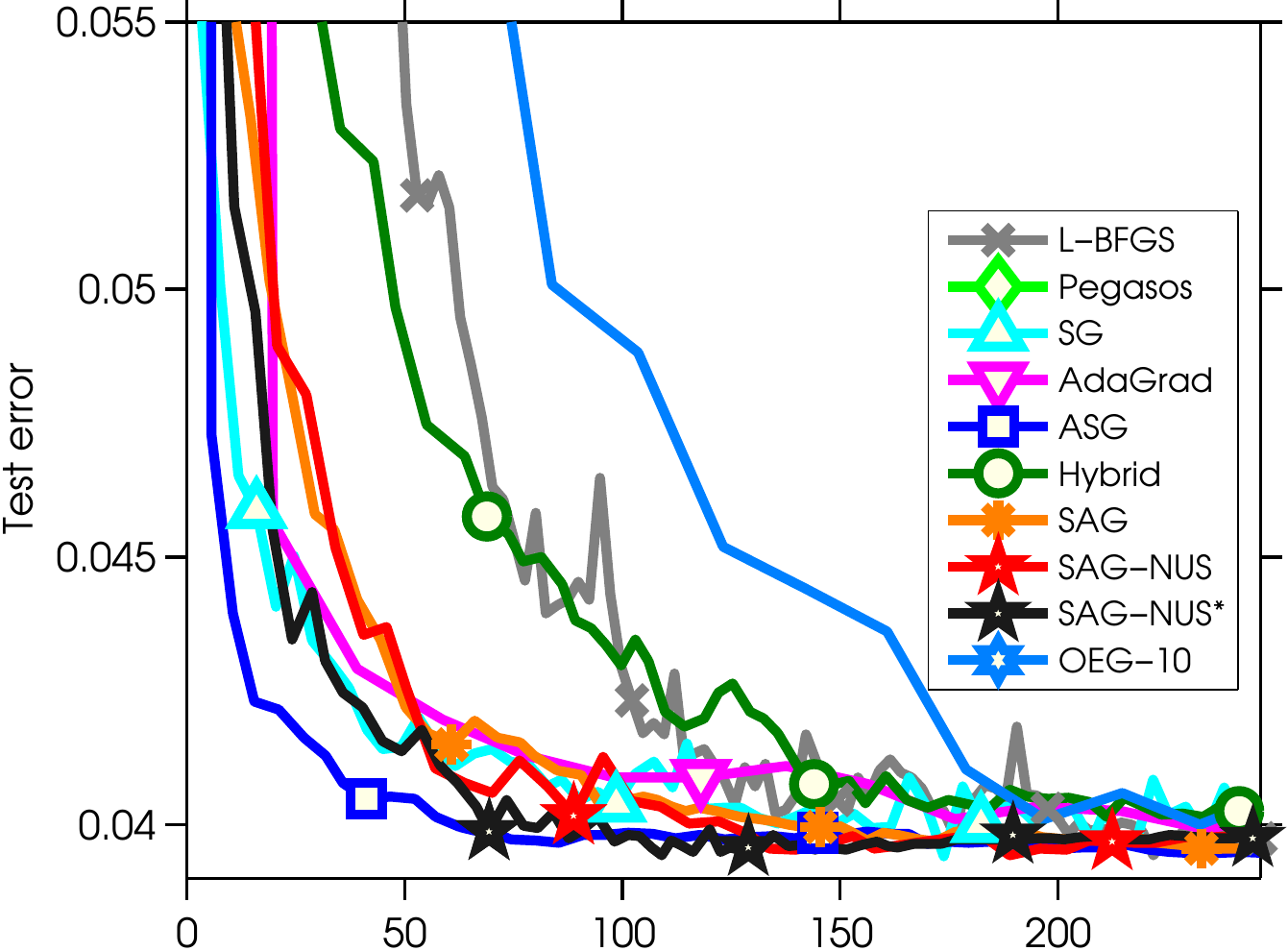}\\
\fig{.35}{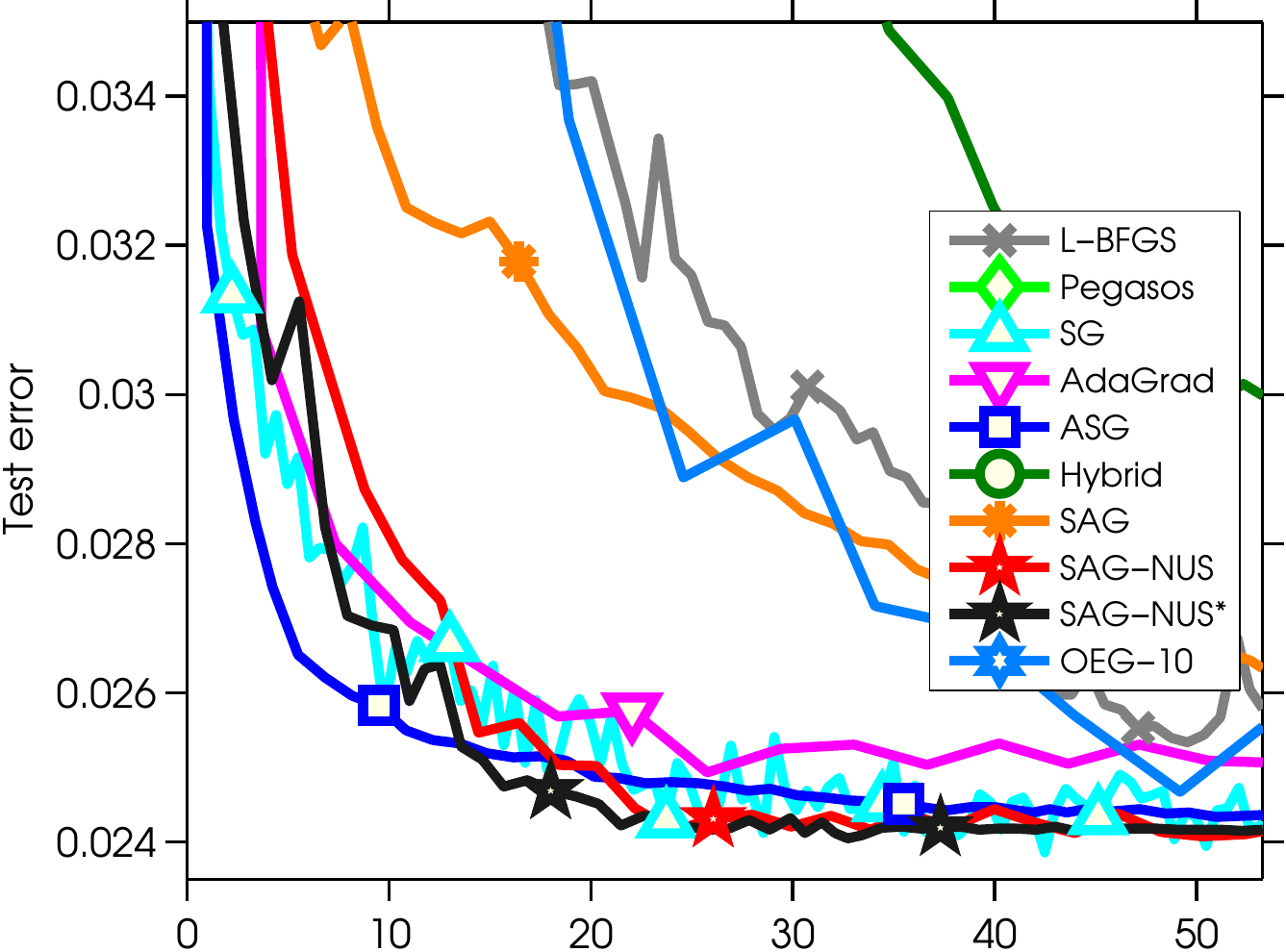}
\fig{.35}{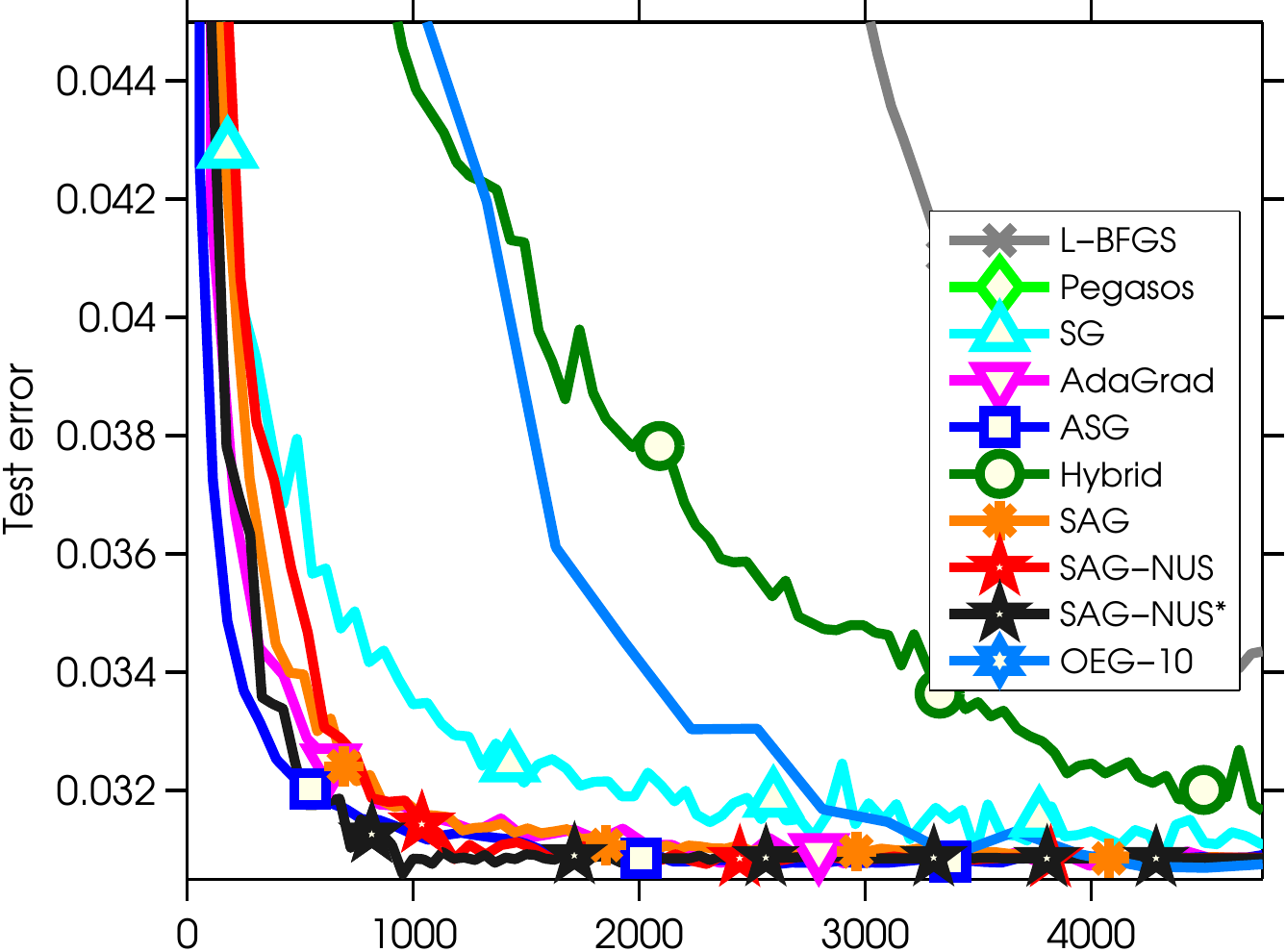}
\end{center}
\caption{Test error against time for different deterministic, stochastic, and semi-stochastic optimization strategies. Top-left: OCR, Top-right: CoNLL-2000, bottom-left: CoNLL-2002, bottom-right: POS-WSJ.}
\label{fig:testTime}
\end{figure*}

In the main body we plot the performance against the effective number of passes as an implementation-independent way of comparing the different algorithms. In all cases except SMD, we implemented a C version of the method and also compared the running times of our different implementations. This ties the results to the hardware used to perform the experiments and to our specific implementation, and thus says little about the runtime in different hardware settings or different implementations, but does show the practical performance of the methods in this particular setting. We plot the training objective against runtime in Figure~\ref{fig:trainTime} and the test error in Figure~\ref{fig:testTime}. In general, the runtime plots show the exact same trends as the plots against the effective number of passes. However, we note several small differences:
\begin{itemize}
\item \emph{AdaGrad} performs slightly worse in terms of runtime. This seems to be due to the extra square root operators needed to implement the method.
\item \emph{Hybrid} performs worse in terms of runtime, although it was still faster than the \emph{L-BFGS} method. This seems to be due to the higher relative cost of applying the L-BFGS update when the batch size is small.
\item \emph{OEG} performed much worse in terms of runtime, even with the better initialization from the previous section.
\end{itemize}
Finally, we note that these implementations are available on the first author's webpage:\\
\url{http://www.cs.ubc.ca/~schmidtm/Software/SAG4CRF.html}

\bibliography{bib}
\bibliographystyle{abbrvnat}

\end{document}